\newcommand{\human}{\ensuremath{H}}
\newcommand{\aH}{\ensuremath{h}}
\newcommand{\aHt}{\ensuremath{\aH_t}}
\newcommand{\humant}{\ensuremath{\human_t}}
\newcommand{\robot}{\ensuremath{A}}
\newcommand{\aR}{\ensuremath{a}}
\newcommand{\aRt}{\ensuremath{\aR_t}}
\newcommand{\robott}{\ensuremath{\robot_t}}
\newcommand{\jointsystem}{\ensuremath{\robot \circ \human}}
\newcommand{\ent}{\ensuremath{\mathbf H}}
\newcommand{\info}{\ensuremath{\mathbf I}}
\newcommand{\M}{\mathcal M}
\newcommand{\paramspace}{\ensuremath{\Theta}}
\newcommand{\param}{\ensuremath{\mathbf{\theta}}}
\newcommand{\narms}{\ensuremath{N}}
\newcommand{\arm}{\ensuremath{a}}
\newcommand{\armt}{\ensuremath{\arm_t}}
\newcommand{\regret}{\ensuremath{\bar{R}}}
\newcommand{\regretT}{\ensuremath{\regret(t)}}
\newcommand{\prior}{\ensuremath{p}}
\newcommand{\mean}{\ensuremath{\mu}}
\newcommand{\meank}{\ensuremath{\mean_k}}
\newcommand{\meanstar}{\ensuremath{\mean^*}}
\newcommand{\paramk}{\ensuremath{\param_k}}
\newcommand{\pulls}{\ensuremath{T}}
\newcommand{\pullsk}{\ensuremath{\pulls_k}}
\newcommand{\pullskt}{\ensuremath{\pullsk(t)}}
\newcommand{\E}{\mathbb{E}}
\newtheorem{thm}{Theorem} 
\newtheorem{prop}[thm]{Proposition}
\newtheorem{cor}[thm]{Corollary}
\def\BibTeX{{\rm B\kern-.05em{\sc i\kern-.025em b}\kern-.08em
    T\kern-.1667em\lower.7ex\hbox{E}\kern-.125emX}}
\begin{document}

\title{The Assistive Multi-Armed Bandit
}

\author{
\IEEEauthorblockN{Lawrence Chan}
\IEEEauthorblockA{\small{\textit{{ University of California, Berkeley}}}\\
chanlaw@berkeley.edu}
\and
\IEEEauthorblockN{Dylan Hadfield-Menell}
\IEEEauthorblockA{\small{\textit{{ University of California, Berkeley}}}\\
dhm@berkeley.edu}
\and
\IEEEauthorblockN{Siddhartha Srinivasa}
\IEEEauthorblockA{\small{\textit{ University of Washington}}\\
siddh@cs.washington.edu}
\and
\IEEEauthorblockN{Anca Dragan}
\IEEEauthorblockA{\small{\textit{{ University of California, Berkeley}}}\\
anca@berkeley.edu}
}
\maketitle{}

\begin{abstract}
Learning preferences implicit in the choices humans make is a well studied problem in both economics and computer science. However, most work makes the assumption that humans are acting (noisily) optimally with respect to their preferences. Such approaches can fail when people are themselves learning about what they want. In this work, we introduce the assistive multi-armed bandit, where a robot assists a human playing a bandit task to maximize cumulative reward. In this problem, the human does not know the reward function but can learn it through the rewards received from arm pulls; the robot only observes which arms the human pulls but not the reward associated with each pull. We offer sufficient and necessary conditions for successfully assisting the human in this framework. Surprisingly, better human performance in isolation does not necessarily lead to better performance when assisted by the robot: a human policy can do better by effectively communicating its observed rewards to the robot. We conduct proof-of-concept experiments that support these results. We see this work as contributing towards a theory behind algorithms for human-robot interaction.
\end{abstract}

\begin{IEEEkeywords}
preference learning, assistive agents
\end{IEEEkeywords}

\section{Introduction}

\emph{Preference learning}~\cite{furnkranz2011preference} seeks to learn a predictive model of human preferences from their observed behavior. These models have been applied quite successfully in contexts like personalized news feeds~\cite{sakagami1997learning, li2010contextual, zhao2016user}, movie recommendations~\cite{basu1998recommendation, goel2009predicting, wang2018movie}, and human robot interaction~\cite{akgun2012keyframe, kuderer2012feature, fischer2016comparison, sadigh2016information, bajcsy2018learning}. We can learn this predictive model by fitting a utility function to revealed preferences~\cite{beigman2006learning, zadimoghaddam2012efficiently, balcan2014learning}, fitting parameters in a pre-specified human model~\cite{kingsley2010preference}, and applying contextual-bandit algorithms~\cite{li2010contextual}.

Central to all of these approaches is a fundamental assumption: human behavior is \textit{noisily-optimal} with respect to a set of \emph{stationary} preferences. Under this assumption, the problem can then be elegantly cast and analyzed as an \emph{inverse optimal control} (IOC)~\cite{kalman1964linear} or \emph{inverse reinforcement learning } (IRL) problem~\cite{russell1998learning}. Here, the human selects an action, takes it, and receives a reward,
which captures their internal preference (e.g. the enjoyment of having their desk organized a particular way). The robot only observes human actions and attempts to learn their preference under the assumption that the human likes the actions they selected; if you go for a particular desk configuration more frequently, the robot will assume that you like that configuration more. 

Unfortunately, in practice, this natural inference assumes stationarity, which is often violated. We have all experienced situations where our preferences change with experience and time~\cite{allais1979so, baron2000thinking}. This is particularly true in situations where we are \emph{ourselves learning} about our preferences as we are providing them~\cite{cyert1975adaptive,shogren2000preference}. For example, as we are organizing our desk, we might be experimenting with different configurations over time, to see what works. 

Now imagine a personal robot is trying to help you organize that same desk. If the robot believes you are optimal, it will infer the wrong preferences. Instead, if the robot accounts for the fact that you are learning about your preferences, it has a better shot at understanding what you want. Even more crucially, the robot can expose you to new configurations -- ones that might improve your posture, something you hadn't considered before and you were not going to explore if left to your own devices.  
\begin{figure}
    \centering
    \includegraphics[width=\columnwidth]{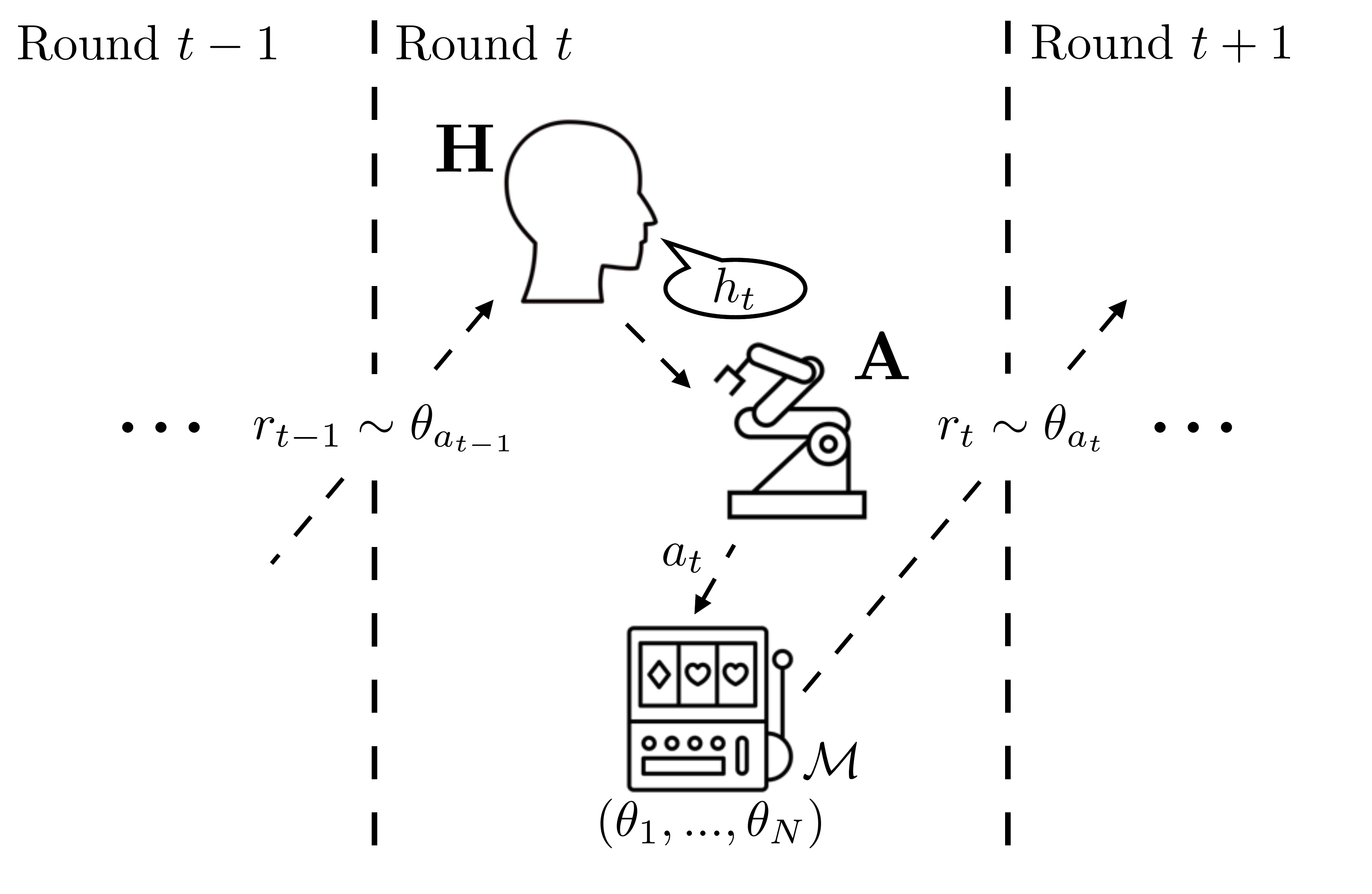}
    \caption{We introduce the assistive multi-armed bandit: a formalism for the problem of helping a learning agent optimize their reward. In each round, the human observes reward and tells the robot which arms they would like it to pull. The robot observes these requests, attempts to infer the reward values, and selects an arm to pull.}
    \label{fig:teaser}
\end{figure}




In this work, we formalize how a robot can actively assist humans who are themselves learning about their preferences.
 Our thesis is that by \emph{modeling} and \emph{influencing} the dynamics of human learning, the robot can enable the human-robot team to learn more effectively and outperform a human learning suboptimally in isolation. 

To this end, we introduce the \textit{assistive multi-armed bandit}: an extension of the classical \emph{multi-armed bandit} (MAB) model of learning.
In each round, the human selects an action, referred to in the bandit setting as an arm. 
However, the robot \emph{intercepts} their intended action and chooses a (potentially different) arm to pull. The human then observes the pulled arm and corresponding reward, and the process repeats (Figure~\ref{fig:teaser}).
We find this model surprisingly rich and fascinating. 
It captures the heart of collaboration: \emph{information asymmetry} and the cost of equalizing it. As the human learns about their preferences, they are compelled to communicate them to the robot as it decides their eventual reward. Analyzing this model allows us to understand the theoretical limits to assisting learning agents and the properties that make learners easier to assist.

Our contributions are the following: 1) we formalize the assistive multi-armed bandit; 2) we give weak sufficient conditions under which a human and robot team learns consistently, a lower bound on the cost of assuming noisy-optimality, and a mutual-information based upper bound on team performance; and 3) we use policy optimization~\cite{duan2016rl} to conduct an in-depth empirical validation of our theoretical results and investigate the effect of incorrectly modeling the human's learning strategy. We train \robot{} against a fixed learning strategy, e.g., $\epsilon$-greedy, and test it against a different learning strategy, e.g., Thompson sampling.  \emph{Our analysis shows a person that is better at learning does not necessarily lead to the human-robot team performing better - there are human learning strategies that are ineffective in isolation but communicate well and enable the robot to effectively assist.} In fact, human learning strategies that are \emph{inconsistent} in isolation, that is, failing a weak notion of asymptotic optimality, can allow the human-robot team to match \emph{optimal} performance in a standard multi-armed bandit.
Our results advance the theory behind algorithmic preference learning and provide guidance for structuring algorithms for human-robot interaction. 


\section{A Family of Bandits} 
\subsection{The Standard Multi-Armed Bandit}
A \textit{multi-armed bandit} (MAB) $\M$ is defined by:
\begin{itemize}
    \item \paramspace: a space of reward distributions parameters; $\param \in \paramspace$;
    \item \narms: an integer representing the number of arms;
    \item \prior: a distribution over $\Theta$.
\end{itemize}

At the start of the game, $\param$ is sampled from $\paramspace$ according to the prior \prior. At each timestep $t$, an arm $\armt \in [1,\ldots,\narms]$ is chosen. A reward $r_t \sim \param_{\armt}$ is sampled from the corresponding arm distribution. A \emph{strategy} is a mapping that determines the correct distribution to sample from given a history of reward observations and previous arm pulls: $K_t(\arm_1, r_1,  \ldots, \arm_{t-1}, r_{t-1})$. 

We use \meank{} to represent the mean of arm $k$, with parameters \paramk.  We use $j^*$ to represent the index of the best arm and \meanstar{} to represent its mean. \pullskt{} represents the number of pulls of arm $k$ up to and including time $t$. The goal of this game is to maximize the sum of rewards over time, or alternatively, to minimize the expectation of the regret \regretT{}, defined as:

\begin{equation}
    \regretT = \sum_t\left( \meanstar - \mean_{\armt} \right) = \sum_k\left( \meanstar - \meank \right)  \pullskt.
\end{equation}

\subsection{Stationary Inverse Optimal Control}
In preference learning, e.g., inverse reinforcement learning (IRL)\cite{ng2000algorithms,abbeel2004apprenticeship} and inverse optimal control (IOC)\cite{kalman1964linear}, an AI system observes (noisily-)optimal behavior and infers the reward function or preferences of that agent.
This relies on a key assumption that the agent being observed knows the value of actions it can take, at least in the sense that they are able to select optimal actions. In a multi-armed bandit setting, this set of assumptions corresponds to assuming that the human knows the parameters of the bandit, but has some small probability of picking a suboptimal arm. We refer to a human with this knowledge state and policy as implementing the $\epsilon$\emph{-optimal policy}. Inferring the reward of an $\epsilon$-optimal, or noisily-optimal, human can be thought of as solving a stationary IOC problem.



\subsection{The Inverse Multi-Armed Bandit}
Before formalizing the problem of \emph{assisting} a human who is learning, rather than noisily-optimal, we look at \emph{passively inferring} the reward from their actions. We call this the \textit{inverse bandit} problem. 
Each Inverse Bandit problem is defined by:

\begin{itemize}
    \item $\M$: a multi-armed bandit problem
    \item $\human$: a bandit strategy employed by the human, that maps histories of past actions and rewards to distributions over arm indices. $\humant: \aH_1 \times r_1 \times \cdots \times \aH_{t-1} \times r_{t-1} \rightarrow \Pi(N)$
\end{itemize}

The goal is to recover the reward parameter $\theta$ by observing \textit{only} the arm pulls of the human over time $\aH_1, ..., \aH_t$. 

Unlike the stationary IOC case, $\human$ does not have access to the true reward parameters. $\human$ receives the reward signal $r_t$ sampled according to $\theta$. As a consequence, the human arm pulls are not i.i.d.; the distribution of human arm pulls changes as they learn more about their preferences.

\subsection{The Assistive Multi-Armed Bandit}
In the \textit{assistive multi-armed bandit}, we have a joint system \jointsystem{} that aims to do well in an MAB $\M$. This strategy consists of two parts: the human player \human{} and robot player \robot{}. As in an MAB, the goal is to minimize the expected regret. The key difference between an assistive MAB and the standard MAB is that the policy is decomposed into a human component and a robot component. The goal is to capture scenarios where our goal, as designers of the robot \robott, is to optimize a reward signal which is only observed \textit{implicitly} through the actions of a human who is themselves learning about the reward function. 

The human and robot components of the policy are arranged in a setup similar to teleoperation. In each round:
\begin{enumerate}
    \item The human player \human{} selects an arm to suggest based on the history of previous arm pulls \emph{{and rewards}}: $\humant(\aR_1, r_1,\ldots, \aR_{t-1}, r_{t-1}) \in [1, \ldots, \narms]$.
    \item The robot player \robot{} selects which arm to actually execute based on the history of the human's attempts and the actual arms chosen:  $\robott(\aH_1, \aR_1, \ldots, \aH_{t-1}, \aR_{t-1}, \aHt) \in [1, \ldots, \narms]$.
    \item The human player \human{} observes the current round's arm and corresponding reward: $(\aRt, r_t \sim \param_{\robott})$.
\end{enumerate}

Unlike the inverse MAB or (stationary) IOC, the assistive MAB formalizes the problem of actually using learned preference knowledge to assist a human. Even if we are able to solve the inverse MAB, this is not useful if we can't actually help a learner reduce regret. We expect an optimal solution to the assistive MAB to improve on suboptimal learning, guide exploration, and correct for noise. 


\section{Theoretical Results}

\subsection{Hardness of Assistive MABs}
We consider the relative difficulty of assisting a person that knows what they want with assisting a person that is learning. We model the first situation as an assistive stationary IOC problem, and the second as an assistive MAB. First, we show that assistive stationary IOC is, as one might expect, quite easy in theory; we show that it is possible to infer the correct arm while making finitely many mistakes in expectation.



\begin{prop}
Suppose that $\human$'s arm pulls are i.i.d and let $f_i$ be the probability $\human$ pulls arm $i$. If $\human$ is noisily optimal, that is, $f_{j^*} >  f_i$ for all sub-optimal $i$, 
there exists a robot policy $\robot$ that has finite expected regret for every value of $\theta$:
\[\E[\regret(T)] \leq \sum_{i \not= j^*} \frac{\meanstar - \mean_{i}}{(\sqrt{f_{j^*}}-\sqrt{f_i})^2}\]
\begin{proof}
(Sketch) Our robot policy \robot{} simply pulls the most commonly pulled arm. 

Let $\hat f_i(t) = \frac{1}{t} \sum_{k=1}^t[\aHt=i]$ be the empirical frequency of \human's pulls of arm $i$ up to time $t$. Note that $\robott = i$ only if $\hat f_{j^*}(t) \leq \hat f_i(t)$. We apply a Chernoff bound to the random variable $\hat f_{j^*}(t) - \hat f_i(t)$. This gives that, for each $i$,
\begin{equation}
\label{eq:chernoffbound}
    \Pr(\hat f_i(t) \leq \hat f_j(t))\leq e^{-t(\sqrt{f_i}-\sqrt{f_j})^2}.
\end{equation}
 Summing Eq.~\ref{eq:chernoffbound} over $t$ and suboptimal arms gives the result.
\end{proof}
\end{prop}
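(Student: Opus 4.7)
The plan is to adopt the strategy indicated in the sketch: the robot always pulls the arm that the human has requested most often so far. The first step is a standard regret decomposition,
\[\E[\regret(T)] = \sum_{i \neq j^*} (\meanstar - \mean_i)\, \E[\pullsk(T)],\]
with $k=i$ indexing the robot's pulls. Since $\robott = i$ implies $\hat f_i(t) \geq \hat f_{j^*}(t)$, I can dominate $\E[\pullsk(T)]$ by $\sum_{t=1}^{T} \Pr(\hat f_i(t) \geq \hat f_{j^*}(t))$, reducing the problem to bounding the probability that the empirical frequency of a less-likely i.i.d.\ outcome catches up to that of a more-likely one.

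For each $t$, I would apply a Chernoff bound to the i.i.d.\ bounded random variables $X_\tau = \I[\humant = i] - \I[\humant = j^*]$, whose mean $f_i - f_{j^*}$ is strictly negative by assumption. The moment generating function is $\E[e^{sX}] = e^s f_i + e^{-s} f_{j^*} + (1 - f_i - f_{j^*})$; minimizing over $s$ gives $e^s = \sqrt{f_{j^*}/f_i}$, and substituting yields the clean expression $1 - (\sqrt{f_{j^*}} - \sqrt{f_i})^2$. Taking $t$-th powers and using $1 - x \leq e^{-x}$ produces the Bhattacharyya-type exponent in Eq.~\ref{eq:chernoffbound}.

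The final step is to sum the geometric series: $\sum_{t=1}^{\infty} e^{-t(\sqrt{f_{j^*}} - \sqrt{f_i})^2} = \frac{1}{e^{(\sqrt{f_{j^*}} - \sqrt{f_i})^2} - 1} \leq \frac{1}{(\sqrt{f_{j^*}} - \sqrt{f_i})^2}$ via $e^c - 1 \geq c$. Multiplying by $\meanstar - \mean_i$ and summing over suboptimal $i$ yields the stated bound, and crucially the bound is uniform in $T$, giving the desired finite expected regret.

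The main obstacle, and the step worth being careful about, is obtaining the Bhattacharyya exponent $(\sqrt{f_{j^*}} - \sqrt{f_i})^2$ rather than the looser $(f_{j^*} - f_i)^2$ that a naive Hoeffding bound on the centered difference would produce. In particular, one must remember that $X_\tau$ has three, not two, possible values: the probability mass $1 - f_i - f_{j^*}$ on the other arms enters the MGF as an additive constant and cannot be dropped. Once this optimization is carried out correctly, the rest of the argument is a mechanical summation.
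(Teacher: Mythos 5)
Your proposal follows exactly the paper's argument: the most-frequently-pulled-arm policy, the Chernoff/Bhattacharyya bound on $\Pr(\hat f_i(t) \geq \hat f_{j^*}(t))$ with exponent $(\sqrt{f_{j^*}}-\sqrt{f_i})^2$, and the geometric-series summation via $e^c - 1 \geq c$. You correctly fill in the details the sketch omits (the three-valued MGF optimization and the tie-handling via the non-strict inequality), so this is a faithful and complete elaboration of the paper's own proof.
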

\vspace{-2pt}

This is in contrast to the standard results about regret in an MAB: for a fixed, nontrivial MAB problem $\M$, any MAB policy has expected regret at least logarithmic in time on some choice of parameter $\theta$~\cite{lai1985asymptotically, mannor2004sample}:
\[\E[\regret(T)] \geq \Omega(log(T)).\] 
Several approaches based on Upper Confidence Bounds (UCB) have been shown to achieve this bound, implying that this bound is tight~\cite{lai1985asymptotically, agrawal1995sample, cappe2013kullback}. Nonetheless, this suggests that the problem of assisting a noisily-optimal human is significantly easier than solving a standard MAB.

The assistive MAB is at least as hard as a standard MAB. For the same sequence of arm pulls and observed rewards, the amount of information available to $\robot$ about the true reward parameters is upper bounded by the corresponding information available in a standard MAB. From a certain perspective, actually \emph{improving} on human performance in isolation is hopelessly difficult -- \robot{} does not get access to the reward signal, and somehow must still assist a person who does.

\subsection{Consistent Assisted Learning}
We begin with the simplest success criterion from the bandit literature: consistency. Informally, consistency is the property that the player eventually pulls suboptimal arms with probability 0. This can be stated formally as the average regret going to 0 in the limit: $\lim_{t\rightarrow \infty} {\regretT}/{t} = 0.$
In an MAB, achieving consistency is relatively straightforward: any policy that is greedy in the limit with infinite exploration (GLIE) is consistent~\cite{robbins1952some,Singh2000}. In contrast, in an assistive MAB, it is not obvious that the robot can implement such a policy when the $\human$ strategy is inconsistent. 
The robot observes no rewards and thus cannot estimate the best arm in hindsight. 

However, it turns out a weak condition on the human allows the robot-human joint system to guarantee consistency:

\begin{prop}
If the human \human{} implements a noisily greedy policy, that is, a policy that pulls the arm with highest sample mean strictly most often, then there exists a robot policy \robot{} such that \jointsystem{} is consistent. 
\label{thm:consistency}
\end{prop}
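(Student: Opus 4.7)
The plan is to exhibit an explicit robot policy $\robot$ that (i) forces each arm to be sampled infinitely often so that \human's internal sample means converge to the true means, and (ii) exploits \human's noisily greedy behavior by following whichever arm \human{} has suggested most frequently. Concretely, I would let $\robot$ maintain an exploration schedule: on round $t$, with probability $\eta_t = 1/\sqrt{t}$ pick an arm uniformly at random (the ``explore'' branch), and otherwise pull $\arg\max_k \sum_{s<t}\I[\aH_s = k]$, the arm \human{} has suggested most often so far (the ``follow'' branch). The rates are chosen so that $\sum_t \eta_t = \infty$ but $\frac{1}{T}\sum_{t\le T} \eta_t \to 0$, meaning exploration happens infinitely often but with vanishing frequency.

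\textbf{Step 1: sample means converge.} Because the explore branch fires with probability $\eta_t$ independently, a Borel--Cantelli / second-moment argument shows each arm is pulled by $\robot$ infinitely often almost surely, and in fact the number of pulls of any arm grows at rate $\Omega(\sqrt{t})$. Since the rewards \human{} observes for a given arm are i.i.d.\ samples from the true arm distribution, a strong law of large numbers (applied along the subsequence of pulls of each arm) shows that \human's empirical mean $\hat\mean_k(t) \to \mean_k$ almost surely for every $k$. In particular, with probability one there exists a (random) time $T_0$ after which $\arg\max_k \hat\mean_k(t) = j^*$.

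\textbf{Step 2: the human's plurality stabilizes on $j^*$.} For $t \geq T_0$, \human's noisily greedy assumption says there is a pointwise gap between the probability \human{} suggests $j^*$ and the probability \human{} suggests any $i \neq j^*$ (conditioned on the history). A martingale/Chernoff argument analogous to Equation~\ref{eq:chernoffbound} in Proposition~1 — now applied to $\hat f_{j^*}(t) - \hat f_i(t)$ where $\hat f_k(t) = \frac{1}{t}\sum_{s\le t}\I[\aH_s=k]$ — shows that this gap persists in the empirical frequencies, so almost surely $\arg\max_k \hat f_k(t) = j^*$ for all sufficiently large $t$. Hence the follow branch eventually always pulls $j^*$.

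\textbf{Step 3: bound average regret.} Combining Steps 1--2, almost surely there is a (random) time $T_1$ past which every follow-branch pull is $j^*$, while the explore branch fires with empirical frequency $\to 0$. The per-round expected regret is therefore bounded by $\eta_t \cdot (\meanstar - \min_k \mean_k)$ after $T_1$, and averaging over $t \in [1,T]$ gives $\regretT/T \to 0$ almost surely, and in expectation via dominated convergence (all per-step regrets are uniformly bounded). This is precisely consistency.

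\textbf{Main obstacle.} The subtle point is the feedback loop between the robot and the human: \human's sample means depend on which arms $\robot$ pulled, and \human's suggestions in turn drive $\robot$'s follow branch, so the process is not i.i.d.\ and classical bandit concentration bounds do not directly apply. The exploration schedule $\eta_t = 1/\sqrt{t}$ is the key device that decouples these dependencies — it guarantees the sample means converge regardless of what the follow branch does, after which the noisily greedy assumption can be exploited. Verifying that Step~2's Chernoff-type bound goes through despite the non-i.i.d.\ nature of \human's suggestions (they are i.i.d.\ only conditionally on the sample means being close to truth) is where the bookkeeping is most delicate; a conditioning argument on the event $\{T_0 < \infty\}$, which has probability one, should handle it.
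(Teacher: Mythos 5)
Your proposal is correct and follows essentially the same route as the paper's proof: a decaying-but-infinite exploration schedule to force every arm to be sampled infinitely often (so \human's sample means converge), combined with following \human's plurality suggestion, which the noisily-greedy assumption guarantees concentrates on $j^*$ once the sample means have converged. The paper uses deterministic exploration sequences $E_k$ rather than your randomized $\eta_t = 1/\sqrt{t}$ schedule, but this difference is immaterial, and your Steps 1--3 track the paper's sketch while being somewhat more explicit about the concentration argument.
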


\begin{proof}
(Sketch) Fix a set of decaying disjoint exploration sequences $E_k$, one per arm, such that $\lim_{t\rightarrow \infty} \frac{1}{t}|E_k \cap \{1,...,t\}| \rightarrow 0$ and $\lim_{t\rightarrow \infty} |E_k \cap \{1,...,t\}| \rightarrow \infty$. In other words, each arm is pulled infinitely often, but at a decaying rate over time. 

Let $i_t$ be the arm most commonly pulled by $\human$ up until time $t$, and $\robot$ be defined by
\[\aRt = \left\{ \begin{array}{rl} k & t \in E_k \\ i_t& \textrm{otherwise} \end{array} \right. .\]
Note that this implies that for suboptimal k, $ \frac{1}{t}\pullskt \rightarrow 0$ in probability as $t\rightarrow \infty$, as the sample means of all the arms converge to the true means, and the rate of exploration decays to zero. This in turn implies that  $\jointsystem$ achieves consistency.
\end{proof}
\vspace{-2pt}
In other words, assistance is possible if the human picks the best actions in hindsight. 
This robot $\robot$ assists the human $\human$ in two ways. First, it helps the human explore their preferences -- $\jointsystem$ pulls every arm infinitely often. This fixes possible under-exploration in the human. Second, it stabilizes their actions and helps ensure that $\human$ does not take too many suboptimal actions - eventually, $\jointsystem$ converges to only pulling the best arm. This helps mitigate the effect of noise from the human.\\

\subsubsection{modeling learning as $\epsilon$-optimality leads to inconsistency}

We now investigate what occurs when mistakenly we model learning behavior as noisy-optimality. 

A simple way to make \jointsystem{} consistent when \human{} is noisily optimal is for \robot{} to pull the arm most frequently pulled by \human{}. 

\begin{prop}
If $\human{}$ plays a strategy that pulls the best arm most often and $\robot{}$ plays $\human$'s most frequently pulled arm, then $\jointsystem$ is consistent.
\end{prop}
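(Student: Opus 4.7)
The plan is to formalize ``pulls the best arm most often'' as an asymptotic statement about the human's empirical pull frequencies, and then show that the robot's rule of copying the most-frequent suggestion locks onto the optimal arm after finitely many rounds. Let $\hat f_i(t) = \frac{1}{t}\sum_{s=1}^{t}\I[\aH_s = i]$ denote the empirical frequency with which the human suggests arm~$i$ through time~$t$, and read the hypothesis as: almost surely, $\liminf_{t\to\infty}\bigl(\hat f_{j^*}(t) - \hat f_i(t)\bigr) > 0$ for every suboptimal~$i$. This is the natural adaptive analogue of the condition used in Proposition~1, and is precisely what is needed so that $\argmax_k \hat f_k(t)$ eventually identifies~$j^*$.

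Given this, the argument is short. Since the robot deterministically plays $\aRt = \argmax_k \hat f_k(t)$, the lim-inf assumption furnishes an almost-surely finite random time~$T_0$ beyond which $\aRt = j^*$ for every $t \ge T_0$. Hence for every suboptimal arm~$k$ and every $T \ge T_0$ we have $\pullsk(T) \le T_0$, so the standard regret decomposition $\regret(T) = \sum_{k \neq j^*}(\meanstar - \meank)\,\pullsk(T)$ yields $\regret(T)/T \le (T_0/T)\sum_{k \neq j^*}(\meanstar - \meank)$, which tends to zero almost surely as $T \to \infty$. Thus \jointsystem{} is consistent.

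The main obstacle is interpretive rather than technical. In the assistive setting the human is adaptive and observes only the reward from the arm that the robot actually pulls, so the robot's ``copy the most frequent arm'' policy can in principle starve the human of feedback about other arms and perturb their future suggestions; for instance, a plain $\epsilon$-greedy human could be trapped on whichever arm the robot locked onto first. Under the strong reading of the hypothesis above, such pathologies are ruled out by assumption and the conclusion is essentially immediate. Strengthening the statement to a finite-expected-regret guarantee in the style of Proposition~1 would instead require both a quantitative frequency gap and a concentration inequality for $\hat f_{j^*}(t) - \hat f_i(t)$ that does not rely on the independence used there; that is where the proof would become substantially harder.
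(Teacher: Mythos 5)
Your argument matches the paper's own proof sketch: the hypothesis guarantees that the human's most frequently pulled arm is almost surely $j^*$ from some finite time onward, after which the robot pulls $j^*$ forever and the average regret vanishes. Your lim-inf formalization of ``most often'' and the explicit regret decomposition are just a more careful rendering of the same one-line argument, and your caveat about the adaptivity of the human (which the paper's hypothesis implicitly assumes away) is a fair observation rather than a gap.
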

\begin{proof} (Sketch) Eventually, $\human$'s most frequent arm converges to the best arm with probability 1 by hypothesis. At this point, $\robot$ will pull the best arm going forward and achieve a per-round regret of 0.
\end{proof}
\vspace{-2pt}
Next we consider the impact of applying this strategy when its assumptions are incorrect, i.e., $\human$ is learning. For simplicity, we assume $\human$ is greedy and pulls the best arm given the rewards so far. We will consider a $1\frac{1}{2}$-arm bandit: a bandit with two arms, where one has a known expected value and the other is unknown. We show that pairing this suboptimal-learner with the `most-frequent-arm' strategy leads the joint system $\jointsystem{}$ to be \emph{inconsistent}:

\begin{prop}
If $\human{}$ is a greedy learner and $\robot{}$ is `most-frequent-arm', then there exists an assistive MAB $\M$ such that $\jointsystem{}$ is inconsistent.
\label{thm:opt-inconsistent}
\end{prop}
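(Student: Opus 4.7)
The plan is to exhibit a concrete $1\frac{1}{2}$-arm bandit in which, with positive probability, the greedy human and most-frequent-arm robot become permanently locked onto the suboptimal known arm. Concretely, take arm 1 to deliver the deterministic reward $\mu_1$ and arm 2 to have a reward distribution with true mean $\mu_2 > \mu_1$ and $p := \Pr_{r \sim \param_2}[r < \mu_1] > 0$. Any sufficiently noisy arm 2 works — e.g., a Bernoulli bandit with $0 < \mu_1 < \mu_2 < 1$, or Gaussian noise of any positive variance centered at $\mu_2$.

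The crux of the argument is identifying an absorbing joint state. Suppose at some time $t$ (i) the human's empirical sample mean for arm 2 is strictly below $\mu_1$, and (ii) arm 1 is strictly the most-frequently pulled arm of $\human$'s suggestions up to $t$. Then the greedy \human{} suggests arm 1 (since its known value exceeds the current estimate of arm 2), the robot \robot{} also pulls arm 1 (by assumption), and the observed reward equals $\mu_1$, which leaves the arm 2 sample mean unchanged while widening arm 1's lead in pull count. Condition (i) and (ii) therefore persist, and \jointsystem{} pulls arm 1 forever thereafter.

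It remains to show this absorbing state is entered with positive probability. I would trace the opening rounds under a natural initialization: the greedy \human{} first suggests arm 2 because its mean is unknown, and \robot{} defaults to copying \human{} in the absence of pull history. With probability $p$, the observed reward $r_1 \sim \param_2$ is strictly below $\mu_1$; conditioned on this, \human{}'s sample mean for arm 2 is $r_1 < \mu_1$ from round 2 onward, so \human{} suggests arm 1 in every subsequent round. After a finite, deterministic number of arm-1 suggestions, arm 1 is strictly more frequent than arm 2 in \human{}'s history, and both (i) and (ii) hold. Hence \jointsystem{} enters the absorbing state with probability at least $p > 0$, and on this event $\regret(T)/T \to \mu_2 - \mu_1 > 0$. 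Therefore $\E[\regret(T)/T]$ fails to converge to $0$, and \jointsystem{} is inconsistent.

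The main subtlety is handling the opening round(s), where sample means and pull-frequencies are both undefined. Any reasonable convention suffices — e.g., the greedy learner samples each unknown arm once before going greedy, and the robot copies the human's suggestion while no arm is strictly most frequent — and the conclusion is insensitive to the particular choice, since the construction only requires a single informative pull of arm 2 returning a below-$\mu_1$ reward to lock the team onto arm 1.
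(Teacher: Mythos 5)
Your proposal is correct and follows essentially the same route as the paper's: identify the absorbing event in which both \human{} and \robot{} select the known (constant) arm in the same round, show it persists forever because the observed reward then never updates \human's estimate of the unknown arm while only widening the known arm's frequency lead, and show it occurs with positive probability, yielding a positive asymptotic per-round regret. Your additional care about tie-breaking and initialization conventions is a reasonable filling-in of details the paper's sketch leaves implicit.
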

\begin{proof} (Sketch) The proof consists of two steps. First, we show a variant of a classical bandit result: if $\human{}$ and $\robot{}$ output the constant arm in the same round, they will for the rest of time. Second, we show that this occurs with finite probability and get a positive lower bound on the per-round regret of $\jointsystem.$
\end{proof}
\vspace{-2pt}
\begin{figure}
    \centering
    \includegraphics[width=0.9\columnwidth]{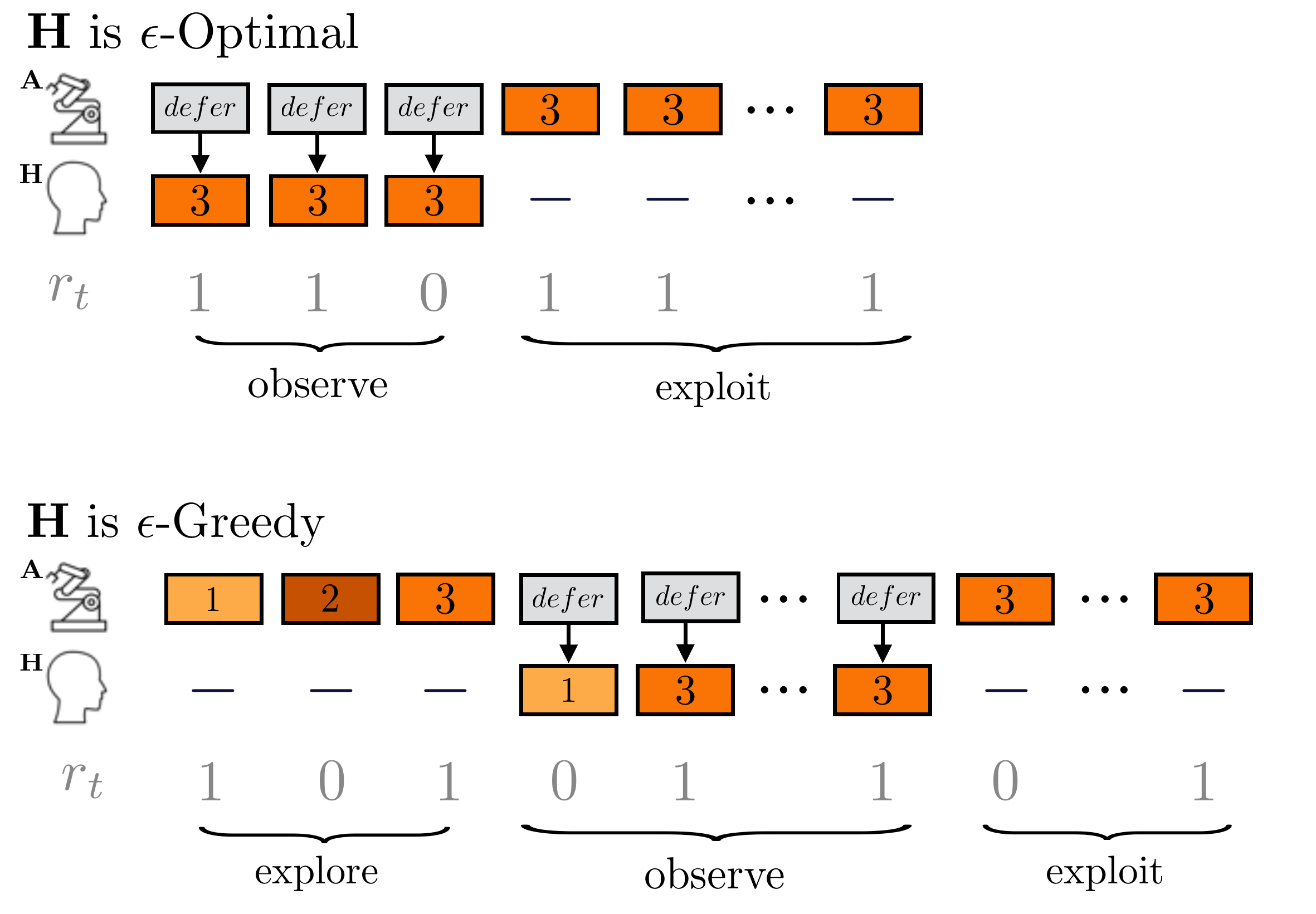}
    \caption{A comparison between assisting an $\epsilon$-optimal \human{} and an $\epsilon$-greedy \human{} in a modified assistive MAB (defined in Section~\ref{sec:other_modes}) where the robot \robot{} has to choose between acting and letting \human{} act. This creates a direct exploration-exploitation tradeoff that makes it easier to qualitatively analyze \robot{}'s behavior. At the top is whether \robot{} defers to the human or pulls an arm, followed by what \human{} pulls (if the robot defers), followed by the reward \human{} observes. When the robot models learning, the policy it learns has a qualitative divide into three components: explore, where the robot explores for the human; observe, when the robot lets the human pull arms; and exploit, when the robot exploits this information and pulls its estimate of the best arm. Crucially, the explore component is only found when learning is modeled. This illustrates Proposition \ref{thm:opt-inconsistent}, which argues that assisting an $\epsilon$-optimal \human{} is different from assisting a learning \human{}. 
    }
    \label{fig:traj_layout}
\end{figure}

While this is a simplified setting, this shows that the types of mistakes and suboptimality represented by learning systems \emph{are not} well modeled by the standard suboptimality assumptions used in research on recommendation systems, preference learning, and human-robot interaction. The suboptimality exhibited by learning systems is stateful and self-reinforcing. Figure~\ref{fig:traj_layout} shows the practical impact of modeling learning. It compares an optimal assistance policy for stationary IOC with an optimal policy for an assistive MAB. In general, assitive MAB policies seem to fit into three steps: \emph{explore} to give \human{} a good estimate of rewards; \emph{observe} \human{} to identify a good arm; and then \emph{exploit} that information.

MABs are the standard theoretical model of reinforcement learning and so this observation highlights the point that the term inverse reinforcement learning is somewhat of a misnomer (as opposed to inverse optimal control): IRL's assumptions about an agent (noisy optimality) lead to very different inferences than \emph{actually} assuming an agent is learning.
 
\subsection{Regret in Assistive Multi-Armed Bandits}
Having argued that we can achieve consistency for such a broad class of human policies in an assistive MAB, we now return to the question of achieving low regret. In particular, we investigate the conditions under which $\jointsystem{}$ achieve $O(\log(T))$ expected regret, as is possible in the standard MAB. 

For any given human $\human$, there exists a robot \robot{} such that \jointsystem{} does as well as $\human$: let $\robot$ copy the \human{}'s actions without modification; that is, $\aRt = \aHt$ for all $t$. So in the case where $\human$ achieves $O(\log(T))$ regret by itself, $\jointsystem{}$ can as well. 

However, a more interesting question is that of when we can successfully assist an suboptimal $\human$ that achieves $\omega(\log(T))$ regret. \emph{While one may hypothesize that better human policies lead to better performance when assisted, this is surprisingly not the case, as the next section demonstrates. }

\subsubsection{An inconsistent policy that is easy to assist}

\begin{figure}
    \centering
    \includegraphics[width=\columnwidth]{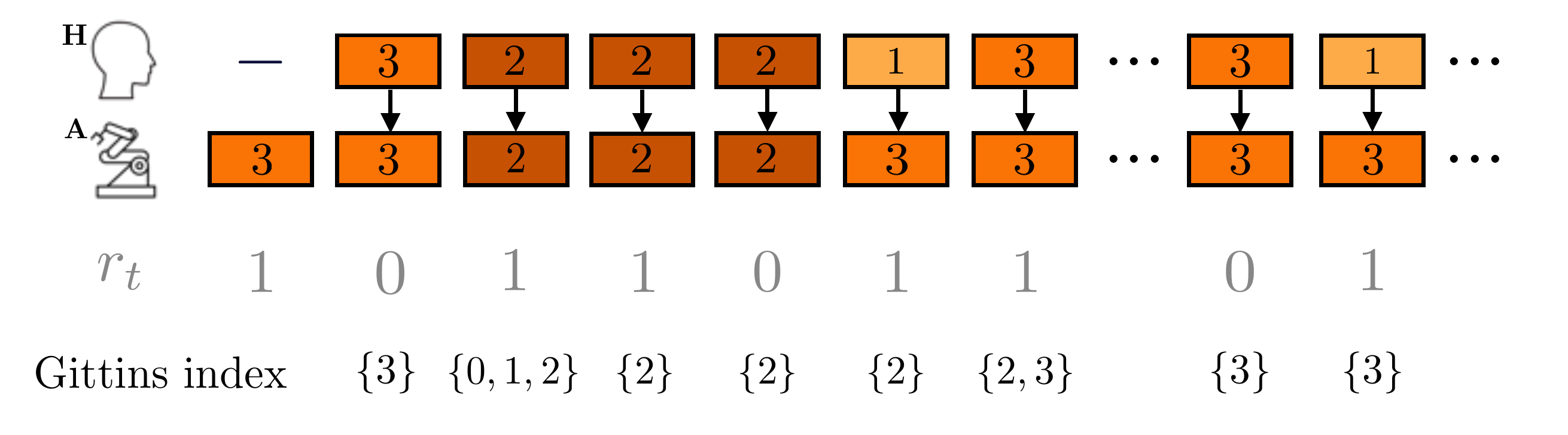}
    \caption{In Proposition~\ref{thm:wsls_optimality} we show that it is possible to match the regret from optimal learning in an standard MAB when assisting the `win-stay-lose-shift' (WSLS) policy. This is because WSLS perfectly communicates the observed rewards to \robot. Here we show an example trajectory from an approximately optimal policy assisting WSLS (computed with Algorithm~\ref{alg:pol-opt}). At the top is what \human{} suggests, followed by what \robot{} pulls, followed by the reward \human{} observes. For comparison, we show the arms selected by the near-optimal Gittins index policy for each belief state. This highlights the importance of communicative learning policies in an assistive MAB.}
    \label{fig:wsls_near_opt}
\end{figure}

Consider a Beta-Bernoulli assistive MAB where rewards are binary: $r_t \in \{0, 1\}$. A classic bandit strategy here is `win-stay-lose-shift' (WSLS)~\cite{robbins1985some} which, as the name suggests, sticks with the current arm if the most recent reward is one: 
\begin{equation}
    \aHt = \left\{ \begin{array}{rl} \aR_{t-1} & r_{t-1} = 1 \\ \mathbf{Unif}(\{k | k \neq \aR_{t-1} \})  & r_{t-1} = 0\end{array} \right. 
\end{equation}

This a simple strategy that performs somewhat well empirically -- although it is easy to see that it is not consistent in isolation, let alone capable of achieving $O(\log(T))$ regret. Indeed, it achieves $\Theta(T)$ regret, as it spends a fixed fraction of its time pulling suboptimal arms. However, if \human{} can implement this strategy, the combined system can implement an arbitrary MAB strategy from the standard MAB setting, including those that achieve logarithmic regret. In other words, the robot can successfully assist the human in efficiently balancing exploration and exploitation \emph{despite only having access to the reward parameter through an inconsistent human}.

\begin{prop}
Let $R^*$ be the optimal regret for a Beta-Bernoulli multi-armed bandit. If \human{} implements the WSLS strategy in the corresponding assistive MAB, then there exists a robot strategy \robot{} such that $\robot \circ \human$ achieves regret $R^*$.
\label{thm:wsls_optimality}
\end{prop}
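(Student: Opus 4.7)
The plan is to observe that WSLS acts as a noiseless channel conveying the reward signal to the robot, and then to have the robot simulate an optimal standard-MAB policy on the decoded reward history. The entire proof hinges on one decoding identity; everything else is bookkeeping.

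The central observation is the following decoding rule. Under WSLS, the human's action at time $t+1$ is a deterministic function of $(\aRt, r_t)$: when $r_t = 1$ we have $\aH_{t+1} = \aRt$, and when $r_t = 0$ we have $\aH_{t+1} \neq \aRt$. Hence $r_t = \I\{\aH_{t+1} = \aRt\}$ with probability one. Because the protocol specifies that $\robot$ observes $\aHt$ before committing to $\aRt$, by the start of round $t \geq 2$ the robot knows $r_{t-1}$ exactly, and by iterating it can reconstruct the full reward history $r_1, \ldots, r_{t-1}$.

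Next I would define $\robot$ explicitly. Let $\pi^*$ be a policy achieving the optimal Beta-Bernoulli regret $R^*$ in a standard MAB, viewed as a mapping from a history $(\arm_1, r_1, \ldots, \arm_{t-1}, r_{t-1})$ to a next arm. At $t=1$, let $\aR_1$ be the first arm $\pi^*$ selects (its unconditional initial pull). At $t \geq 2$, set $\hat r_s := \I\{\aH_{s+1} = \aR_s\}$ for $s < t$ and pull $\aRt = \pi^*(\aR_1, \hat r_1, \ldots, \aR_{t-1}, \hat r_{t-1})$. All quantities on the right are in $\robot$'s information set when $\aRt$ must be chosen, so this is a valid policy under the assistive protocol.

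Finally, I would verify by induction on $t$ that the joint law of $(\aR_1, r_1, \ldots, \aR_t, r_t)$ under $\jointsystem$ equals the law of $(\arm_1, r_1, \ldots, \arm_t, r_t)$ under $\pi^*$ run on the same bandit $\M$. The base case is immediate. For the inductive step, $\hat r_s = r_s$ almost surely by the decoding identity, so $\aRt$ coincides with the arm $\pi^*$ would choose on the true history, after which $r_t \sim \param_{\aRt}$ has the matching conditional law. Equality of laws implies equality of expected regret, giving $\E[\regret(T)] = R^*$. The only subtlety worth flagging, and the main obstacle one might worry about, is the causal ordering of decoding and action selection; but the protocol's observation of $\aHt$ prior to $\aRt$ is exactly what makes the construction causally well-defined, and this is effectively the whole content of the proof.
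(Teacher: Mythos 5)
Your proposal is correct and follows essentially the same route as the paper's proof: the pair $(\aR_{t-1}, \aHt)$ losslessly encodes $r_{t-1}$ under WSLS, so the robot can reconstruct the full reward history and simulate any optimal standard-MAB policy. Your version simply makes explicit the decoding rule $r_{t-1} = \I\{\aHt = \aR_{t-1}\}$, the causal validity of the construction, and the coupling-of-laws induction that the paper's sketch leaves implicit.
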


\begin{proof} (Sketch) The pair $(\aR_{t-1}, \aHt)$ directly encodes the previous reward $r_{t-1}$. This means that \robott{} can be an arbitrary function of the history of arm pulls and rewards and so it can implement the MAB policy that achieves regret $R^*$. 
\end{proof}
\vspace{-2pt}
Figure~\ref{fig:wsls_near_opt} compares a rollout of this \jointsystem (we describe the approach in Section~\ref{sec:pol-opt}) with a rollout of a near-optimal policy for a standard MAB. 

\subsubsection{Communication upper bounds team performance}

The WSLS policy is not unique in that it allows $\jointsystem{}$ to obtain logarithmic regret. A less interesting, but similarly effective policy, is for the human to directly encode their reward observations into their actions; the human need not implement a sensible bandit policy. For example, the following purely communicative $\human$ also works for a Beta-Bernoulli bandit:
\begin{equation}
    \aHt = \left\{ \begin{array}{rl} 0 & r_{t-1} = 0 \\1  & r_{t-1} = 1 \end{array} \right.
\end{equation}

We can generalize the results regarding communicative policies using the notion of mutual information, which quantifies the amount of information obtained through observing the human arm pulls. 

Let $\info(X;Y)$ be the mutual information between $X$ and $Y$, $\ent(X)$ be the entropy of $X$, and $\ent(X|Y)$ be the entropy of $X$ given $Y$. 


\begin{prop}
Suppose that the probability the robot pulls a suboptimal arm at time $t$ is bounded above by some function $f(t)$, that is $P(\robott \not = j^*) \leq f(t)$. Then the mutual information $\info(j^*;\aH_1\times \cdots \times \aHt)$ between the human actions up to time t and the optimal arm must be at least $(1-f(t))\log \narms - 1$. 
\label{thm:MI_bound}
\end{prop}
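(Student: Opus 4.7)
The statement is an application of Fano's inequality composed with the data processing inequality. The plan has two main steps. First, I would argue that \aRt{} is, modulo robot-internal randomness that is independent of $j^*$, a function of $(\aH_1, \ldots, \aHt)$ alone; this yields a Markov chain $j^* \to (\aH_1, \ldots, \aHt) \to \aRt$ and hence, via data processing, $\info(j^*; \aRt) \leq \info(j^*; \aH_1 \times \cdots \times \aHt)$. Second, I would apply Fano's inequality to \aRt{} viewed as an estimator of $j^*$ with error probability at most $f(t)$, to obtain a direct lower bound on $\info(j^*; \aRt)$.

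For the first step, I would proceed by induction on $t$. The base case $\aR_1 = \robot_1(\aH_1)$ follows directly from the protocol. For the inductive step, the protocol gives $\aRt = \robott(\aH_1, \aR_1, \ldots, \aR_{t-1}, \aHt)$, and by the inductive hypothesis each $\aR_s$ with $s < t$ is a function of $(\aH_1, \ldots, \aH_s)$ and independent robot randomness; substituting expresses \aRt{} in the required form. Since the robot's internal coin flips are independent of $j^*$, this yields the conditional independence $j^* \perp \aRt \mid (\aH_1, \ldots, \aHt)$.

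For the second step, Fano's inequality applied to \aRt{} as a guess for $j^* \in \{1, \ldots, \narms\}$ gives
\[\ent(j^* \mid \aRt) \leq \ent_b(f(t)) + f(t)\log(\narms - 1) \leq 1 + f(t)\log \narms,\]
where $\ent_b(f(t))$ is the entropy of a $\mathrm{Bern}(f(t))$ variable, which is at most $1$, and we bounded $\log(\narms - 1) \leq \log \narms$. Taking $\ent(j^*) = \log \narms$ (the maximum entropy on $\narms$ arms, implicit in the statement), we obtain
\[\info(j^*; \aRt) = \ent(j^*) - \ent(j^* \mid \aRt) \geq (1 - f(t))\log \narms - 1,\]
and chaining through the data processing bound from the first step completes the proof.

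The main obstacle is really just the bookkeeping in the Markov chain step: one must be explicit that the robot's policy randomness is independent of $j^*$ so that conditioning on the human suggestion history genuinely severs \aRt{} from $j^*$. A secondary subtlety worth flagging is that the right-hand side $(1 - f(t))\log \narms - 1$ implicitly assumes the prior gives $\ent(j^*) = \log \narms$; if $\prior$ concentrates on a strict subset of parameters so that $\ent(j^*) < \log \narms$, the leading term should be replaced by $\ent(j^*)$.
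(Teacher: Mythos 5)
Your proposal is correct and follows essentially the same route as the paper: Fano's inequality applied to \aRt{} as an estimator of $j^*$, combined with the data processing inequality through the Markov chain $j^* \to (\aH_1, \ldots, \aHt) \to \aRt$. Your version is somewhat more careful than the paper's — the inductive justification of the Markov chain and the explicit flag that the leading term should be $\ent(j^*)$ rather than $\log \narms$ when the prior is not uniform over the best arm are both points the paper glosses over — but the underlying argument is the same.
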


\begin{proof}
We can consider the multi-armed bandit task as one of deducing the best arm from the human's actions. This allows us to apply Fano's inequality~\cite{fano2008fano} to $P(\robott \not = j^*)$, and using the fact that the entropy of a Bernoulli random variable is bounded above by $1$, we get
\begin{align*}
    P(\robott \not = j^*) \log(\narms-1) &\geq \ent(\hat j^*|j^*) - \log 2\\
    &= \ent(\hat j^*) - \info(\hat j^*;j^*) -1 \\
    &\geq \log \narms - \info(j^*;\aH_1\times \cdots \times \aHt) - 1. 
\end{align*}
Rearranging terms and using $P(\robott \not = j^*) \leq f(t)$, we get
\begin{align*}
    \info(j^*;\aH_1\times \cdots \times \aHt) &\geq \log \narms -  f(t) \log(\narms-1) - 1 \\
    &\geq (1-f(t))\log \narms - 1.
\end{align*}
\end{proof}
\vspace{-2pt}
Intuitively, since the probability of error is bounded by $f(t)$, in $(1 - f(t))$ cases the human actions conveyed enough information for A to successfully choose the best action out of $N$ options. This corresponds to $\log N$ bits, so there needs to be at least $(1 - f(t)) \log N$ bits of information in \human{}'s actions.

\begin{cor}
Suppose that the probability the robot pulls a suboptimal arm at time $t$ is bounded above by some function $f(t)$, that is $P(\robott \not = j^*) \leq f(t)$. Then the mutual information $\info(\aR_1\times r_1 \times \cdots \times \aR_{t-1}\times r_{t-1};\aH_1\times \cdots \times \aHt)$ between the human actions up to time t and the human observations must be at least $(1-f(t))\log \narms - 1$. 
\end{cor}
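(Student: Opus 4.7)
The plan is to reduce the corollary to the preceding proposition via the data processing inequality. The key observation is that the human's actions $H_1,\ldots,H_t$ are determined by the human's policy applied to the observation history $\aR_1, r_1, \ldots, \aR_{t-1}, r_{t-1}$, together with internal randomness that is independent of the sampled parameter $\theta$ (and hence of $j^*$). Thus the random variables form a Markov chain
\[ j^* \longrightarrow (\aR_1, r_1, \ldots, \aR_{t-1}, r_{t-1}) \longrightarrow (\aH_1, \ldots, \aHt), \]
since conditioned on the observation history, the human's action sequence is independent of $j^*$.

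Next I would invoke the data processing inequality on this Markov chain, which gives
\[ \info(j^*;\aH_1\times \cdots \times \aHt) \leq \info(\aR_1\times r_1 \times \cdots \times \aR_{t-1}\times r_{t-1};\aH_1\times \cdots \times \aHt). \]
Chaining this with the bound from Proposition \ref{thm:MI_bound}, namely $\info(j^*;\aH_1\times \cdots \times \aHt) \geq (1-f(t))\log \narms - 1$, yields the desired conclusion immediately.

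The only potentially subtle step is justifying the Markov chain, i.e., verifying that all information $j^*$ carries about the human's actions is routed through the observation history. This holds because in the assistive MAB protocol defined earlier, the human's strategy $\humant$ is defined as a map from the history $(\aR_1, r_1,\ldots, \aR_{t-1}, r_{t-1})$ to a distribution over arm suggestions, so any dependence of $\aHt$ on $\theta$ (and in particular on $j^*$) factors through this history. I would state this as a one-line remark rather than belaboring the measure-theoretic details, since the structure of the interaction was fixed in the problem definition. No new calculations beyond those in Proposition \ref{thm:MI_bound} are needed.
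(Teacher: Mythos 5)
Your proposal is correct and matches the paper's proof exactly: the paper likewise observes that $j^*$ is conditionally independent of the human actions given the human observations and then applies the data processing inequality together with Proposition~\ref{thm:MI_bound}. Your extra remark justifying the Markov chain from the definition of $\humant$ is a welcome elaboration of a step the paper states without comment.
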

\begin{proof}
Since the best arm is independent of the human actions given the human observations, this follows immediately from the data processing inequality and proposition \ref{thm:MI_bound}.
\end{proof}
\vspace{-2pt}
 In order to achieve regret logarithmic in time, we must have that $P(K_t \not = j^*) \leq \frac{C}{t}$ for some $C>0$. Applying proposition \ref{thm:MI_bound} above implies that we must have
\begin{align*}
    \info(j^*;\aH_1\times \cdots \times \aHt) \geq (1-\frac{C}{t})\log \narms - 1
\end{align*}

Note that the term $\info(\hat j^*; \aH_1\times \cdots \times \aHt)$ depends on \textit{both} the human policy and the robot policy - no learning human policy can achieve this bound unless the human-robot system $\jointsystem$ samples each arm sufficiently often. \emph{As a consequence, simple strategies such as inferring the best arm at each timestep and pulling it, cannot achieve the $\Theta(\log T)$ lower bound on regret.}

\section{Algorithms for Assistive Multi-Armed Bandits}


The optimal response to a given human strategy can be computed by solving a partially observed Markov decision process (POMDP)~\cite{kaelbling1998planning}. The state is the reward parameters $\theta$ and $\human$'s internal state. The observations are the human arm pulls. In this framing, a variety of approaches can be used to compute policies or plans, e.g., online Monte-Carlo planning~\cite{silver2010monte, guez2012efficient} or point-based value iteration~\cite{pineau2003point}. 

In order to run experiments with large sample sizes, our primary design criterion was fast online performance. This lead us to use a direct policy optimization approach. The high per-action cost of Monte-Carlo planners makes them impractical for this problem. Further, explicitly tracking $\theta$ and $\human$'s internal state is strictly harder than solving the inverse MAB.  



\begin{algorithm}
\begin{algorithmic}
\State human policy $\human{}$
\State initialize parameterized policy $\pi(w; \cdot)$, policy parameters $w$
\For{$i \leq  nItrs$}
    \State $\xi\textrm{s}, r\textrm{s} \gets$ \textsc{Sample-Trajectories}($\pi(w; \cdot)$, $Size$, $T$)
    \State $\hat{\partial w} \gets$ \textsc{Policy-Gradient}($\xi$s, $\pi(w; \cdot)$) \Comment{~\cite{sutton2000policy}}
    \State $w \gets w + \hat{\partial w}$
\EndFor\\

\Procedure{Sample-Trajectories}{$\pi(w, \cdot)$, $Size$, $T$}
\State initialize empty array $\xi$s
\For{$i \leq$ $Size$}
    \State $\param \sim \paramspace$
    \For{$t \leq  T$}
        \State $\aHt \sim \humant(\aH_1, r_1,\ldots, \aR_{t-1}, r_{t-1})$
        \State $\aRt \sim \pi(w; \aH_1, \aR_1, \ldots, \aH_{t-1}, \aR_{t-1}, \aHt)$
        \State $r_t \sim \param_{\aRt}$
    \EndFor
    \State $\xi \gets \xi = [(\aH_1, \aR_1, r_1), ..., (\aH_T, \aR_T, r_T)]$
    \State $\xi$s $\gets \xi$s$+[\xi]$
\EndFor \\
\Return $\xi$s
\EndProcedure
\end{algorithmic}
\caption{Policy Optimization for the Assistive MAB}
\label{alg:pol-opt}
\end{algorithm}

Our approach applies the policy optimization algorithm of \cite{duan2016rl} to assistive MABs. Given an assistive MAB $(\M, \human)$, we sample a batch of reward parameters $\theta$ from the prior $p(\Theta)$; generate trajectories of the form $\xi = [(\aH_1, \aR_1, r_1), ..., (\aHt, \aRt, r_t)]$ from $\human$ and the current robot policy $\robot^{(i)}$; and use the trajectories to update the robot policy to $\robot^{(i+1)}$. 
During this offline training stage, since we are sampling reward parameters rather than using the ground truth reward parameters, we can use the generated rewards $r_t$ to improve on $\robot^{(i)}$. 

We represent \robot's policy as a recurrent neural network (RNN). At each timestep, it observes a tuple $(\aR_{t-1}, \aHt)$ where $\aR_{t-1}$ is the most recent robot action and $\aHt$ is the most recent human action. In response, it outputs a distribution over arm indices, from which an action is sampled. Given a batch of trajectories, we use an approximate policy gradient method~\cite{sutton2000policy} to update the weights of our RNN\footnote{Our code is available online at \url{https://github.com/chanlaw/assistive-bandits}}. We summarize this procedure in Algorithm~\ref{alg:pol-opt}. In our experiments, we used Proximal Policy Optimization (PPO)~\cite{schulman2017proximal}, due to its ease of implementation, good performance, and relative insensitivity to hyperparameters. 

\section{Experiments}
In our experiments, we used a horizon 50 Beta-Bernoulli bandit with four arms. Pulling the $i$th arm produces a reward of one with probability $\theta_i$ and zero with probability $1-\theta_i$: $\Theta = [0,1]^4$. We assume a uniform prior over $\Theta$: $\theta_i \sim \textrm{Beta}(1, 1)$.

We consider 5 classes of human policy:
\begin{itemize}
    \item \textbf{$\epsilon$-greedy}, a learning \human{} that chooses the best arm in hindsight with probability $1-\epsilon$ and a random arm with probability $\epsilon$.\footnote{We performed grid search to pick an $\epsilon$ based on empirical performance, and found that $\epsilon=0.1$ performed best.}
    \item \textbf{WSLS}, the \emph{win-stay-lose-shift} policy~\cite{robbins1985some} sticks with the arm pulled in the last round if it returned 1, and otherwise switches randomly to another arm.
    \item \textbf{TS}, the \emph{Thompson-sampling} policy~\cite{thompson1933likelihood} maintains a posterior over the arm parameters, and chooses each arm in proportion to the current probability it is optimal. This is implemented by sampling a particle from the posterior of each arm, then pulling the arm associated with the highest value.
    \item \textbf{UCL}, the \emph{upper-credible limit} policy \cite{reverdy2014modeling}  is an algorithm similar to Bayes UCB \cite{kaufmann2012bayesian} with softmax noise, used as a model of human behavior in a bandit environment.\footnote{We set $K=4$ and softmax temperature $\tau=4$.}
    \item \textbf{GI}, the \emph{Gittins index} policy~\cite{gittins1979bandit} is the Bayesian optimal solution to an infinite horizon discounted objective MAB.\footnote{We follow the approximations described by Chakravorty and Mahajan in~\cite{chakravorty2014multi}, and choose a discount rate ($\gamma=0.9$) that performs best empirically using grid search.}
\end{itemize}
In addition, we also defined the following noisily-optimal human policy to serve as a baseline:
\begin{itemize}
    \item \textbf{$\epsilon$-optimal}, a \emph{fully informed} \human{}  that knows the reward parameters $\theta$, chooses the optimal arm with probability $1-\epsilon$, and chooses a random action with probability $\epsilon$.\footnote{We set $\epsilon$ to match that of the $\epsilon$-greedy policy.} 
\end{itemize}


\subsection{Inverse Multi-Armed Bandit}
\label{sec:inverse_bandit}

\begin{table}[t]
\caption{Log-density of true reward params in a horizon 5 inverse MAB
}
\begin{center}
    \begin{tabular}{l|c|c|}
\cline{2-3}
 & \multicolumn{2}{c|}{Assumed \human{} Policy} \\ \hline
\multicolumn{1}{|l|}{Actual \human{} Policy} & \multicolumn{1}{l|}{\textbf{Correct Policy}} & \multicolumn{1}{l|}{\textbf{$\epsilon$-Optimal}} \\ \hline
\multicolumn{1}{|l|}{\textbf{$\epsilon$-greedy}} & 0.49 & -0.23 \\ \hline
\multicolumn{1}{|l|}{\textbf{WSLS}} & 0.95 & 0.13 \\ \hline
\multicolumn{1}{|l|}{\textbf{TS}} & 0.02 & -0.23 \\ \hline
\multicolumn{1}{|l|}{\textbf{UCL}} & 0.03  & -0.30 \\ \hline
\multicolumn{1}{|l|}{\textbf{GI}} & 0.94 & 0.20 \\ \hline\hline
\multicolumn{1}{|l|}{\textbf{$\epsilon$-Optimal}} & -- & 1.55 \\ \hline
\end{tabular}
\end{center}

\label{tbl:inverse_bandit_inference}
\end{table}

Our first experiment investigates the miscalibration that occurs when we do not model learning behavior. A robot that doesn't model human learning will be overconfident. To show this, we use the Metropolis-Hastings algorithm~\cite{metropolis1953equation} to approximate the posterior over reward parameters \param{} given human actions. We compare the posterior we get when we model learning with the posterior that assumes \human{} is $\epsilon$-optimal.

We compared the log-density of the true reward parameters in the posterior conditioned on 5 \human{} actions, under both models, when \human{} is actually learning. We report the results in Table~\ref{tbl:inverse_bandit_inference}. For every learning human policy, we find that the log-density of the true reward parameters is significantly higher when we model learning than when we do not. In the case of $\epsilon$-greedy and TS, we find that the posterior that fails to model learning assigns negative log-density to the true parameters. This means the posterior is a \emph{worse estimate} of $\theta$ than the prior.

\subsection{Assistance is Possible}
\label{sec:assist_possible}
\begin{figure}[b]
    \centering
    \includegraphics[width=.98\columnwidth]{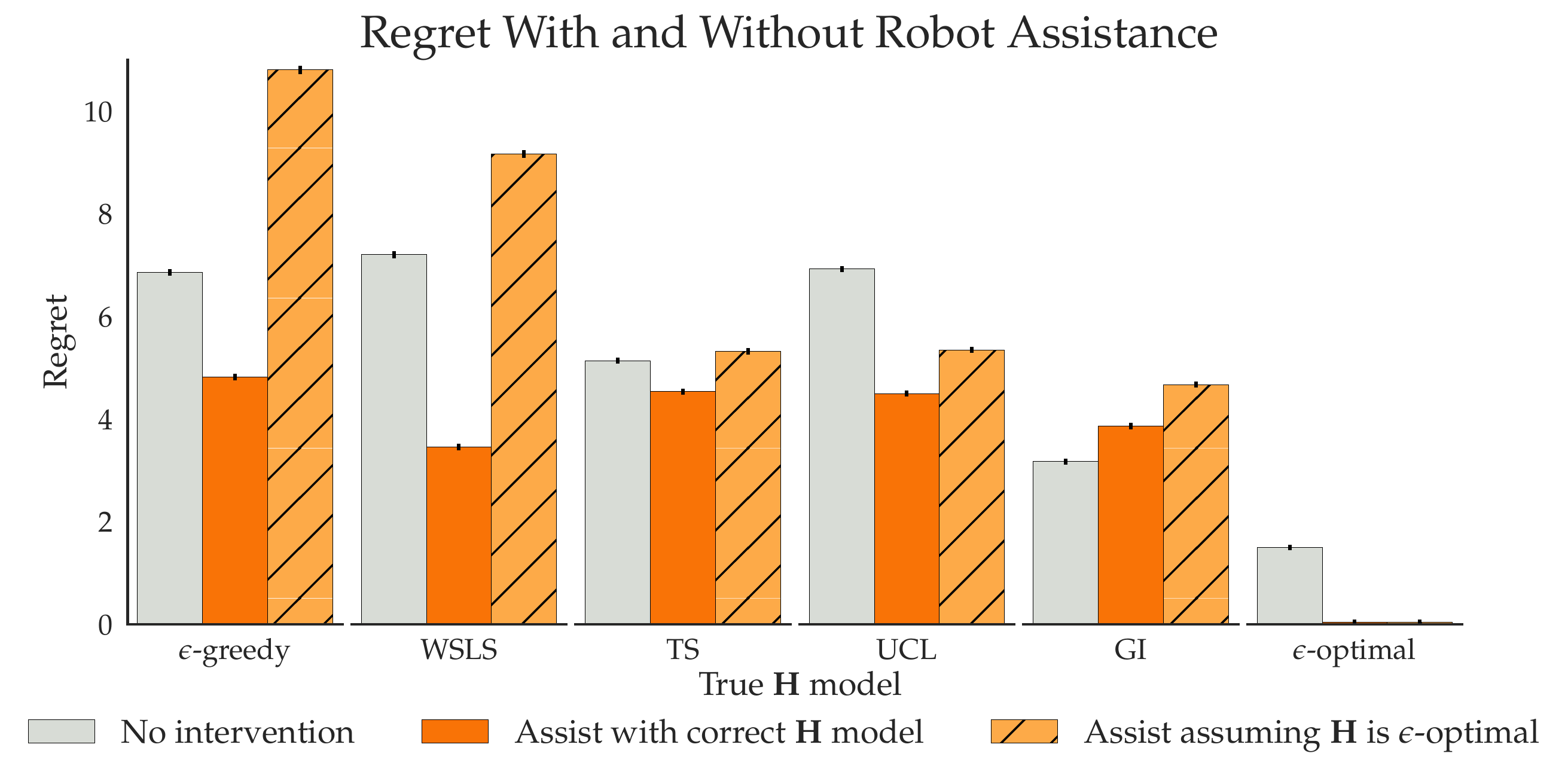}
    \caption{Averaged regret of various human policies (lower = better) over 100,000 trajectories when unassisted, assisted with the correct human model, and assisted assuming that the human is noisily-optimal. 
    Assistance lowers the regret of most learning policies, but it is important to model learning:
    ignoring that the human is learning can lead to worse performance than no assistance.
    Note that assisted WSLS performs almost as well as the Gittins Index policy, an empirical verification of Proposition~\ref{thm:wsls_optimality}. 
    }
    \label{fig:we-can-help}
\end{figure}
Propositions \ref{thm:consistency} and \ref{thm:wsls_optimality} prove that it is possible to assist suboptimal learners in theory. In this section, we show that it is possible in practice. We use Algorithm~\ref{alg:pol-opt} to train a recurrent policy for each human policy. In Fig.~\ref{fig:we-can-help}, we report the performance with assistance and without.

\subsubsection{Policy optimization details}
\label{sec:pol-opt}
To alleviate the problem of exploding and vanishing gradients~\cite{pascanu2013difficulty}, we use Gated Recurrent Units (GRU)~\cite{cho2014properties} as the cells of our recurrent neural network. The output of the GRU cell is fed into a softmax function, and this output is interpreted as the distribution over actions. To reduce to variance in our policy gradient estimate, we also use a value function baseline ~\cite{degris2012model} and apply Generalized Advantage Estimation (GAE)~\cite{schulman2015high}. We used weight normalization~\cite{salimans2016weight} to speed up training. We used a batch size of $250000$ timesteps or $5000$ trajectories per policy iteration, and performed $100$ policy iterations using PPO.

\begin{figure}
    \centering
    \includegraphics[width=0.93\columnwidth]{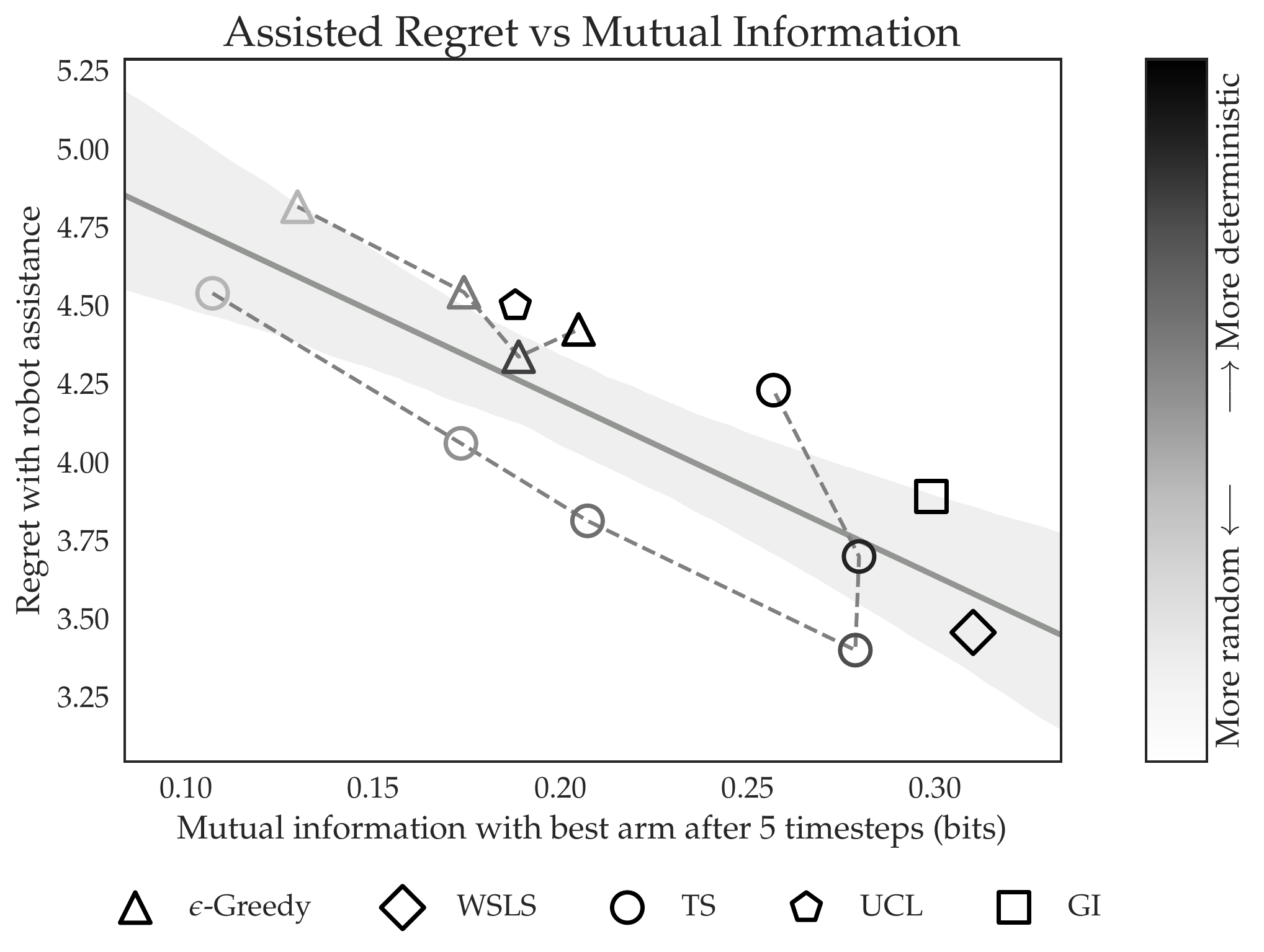}
    \caption{The assisted regret of various policies, plotted against the mutual information between the best arm and the policy's actions in the first 5 timesteps. We also plot the best-fit line, with 95\% confidence interval, for the regression between assisted regret and mutual information. We augmented our policies with variants of $\epsilon$-greedy and Thompson sampling with less randomness. Policies with high mutual information lead to lower regret when assisted, supporting our theoretical findings.}
    \label{fig:regrest_vs_mi}
\end{figure}

\subsubsection{Results}
To quantitatively test the hypotheses that our learned models successfully assist, we perform a two factor ANOVA. We found a significant interaction effect, $F(3, 999990) = 1778.8$, $p < .001$, and a post-hoc analysis with Tukey HSD corrections showed that we were able to successfully assist the human in all four sub-optimal learning policies ($p < .001$).

Our WSLS results agree with Proposition~\ref{thm:wsls_optimality}. Assisted WSLS achieves a regret of $3.5$, close to the regret of the best-performing unassisted policy, $3.2$. The gap in reward is due to our choice to employ approximate policy optimization. 
We provide an example trajectory in Fig.~\ref{fig:wsls_near_opt}. The actions selected are almost identical to those of the optimal policy. 

This suboptimality also accounts for the small increase in regret when assisting the Gittins index policy.


\subsubsection{Modeling learning matters} Proposition \ref{thm:opt-inconsistent} shows that modeling learning matters. We compared assistance assuming that \human{} is $\epsilon$-optimal with assistance with the correct (learning) model. We report the results in Fig.~\ref{fig:we-can-help}. We found that the regret with the wrong model is higher than no intervention in every case but UCL. Assisted WSLS with the wrong model has double the regret of assisted WSLS with the correct model.   Proposition~\ref{thm:opt-inconsistent} shows that in theory, ignoring learning leads to inconsistency. Fig.~\ref{fig:we-can-help} shows that this mistake leads to higher regret empirically.

\subsection{Mutual Information Predicts Performance}
Proposition~\ref{thm:MI_bound} implies that high mutual information is required for good team performance. To verify this, we computed the mutual information for a variety of combined policies after 5 timesteps. Fig.~\ref{fig:regrest_vs_mi} plots this against the regret of the combined system. We consider several variants of $\epsilon$-greedy and TS that are more or less deterministic. We consider $\epsilon \in [0, 0.02, 0.05, 0.1].$ To make TS more deterministic, we use the mean of a sample of $n$ particles to select arms. We consider $n \in [1, 2, 3, 10, 30, \infty]$. 

\begin{table}[t]
\begin{center}
\caption{Increase in Reward from Robot Assistance}
    \begin{tabular}{l|c|c|c|c|c||c|}
\cline{2-7}
\multicolumn{1}{c|}{} & \multicolumn{6}{c|}{Assumed \human{} Policy} \\ \hline
\multicolumn{1}{|l|}{Actual \human{}} & \multicolumn{1}{c|}{\textbf{$\epsilon$-greedy}} & \multicolumn{1}{c|}{\textbf{WSLS}} & \multicolumn{1}{c|}{\textbf{TS}} &
\multicolumn{1}{c|}{\textbf{UCL}} &
\multicolumn{1}{c||}{\textbf{GI}} & \multicolumn{1}{c|}{\textbf{$\epsilon$-optimal}} \\ \hline
\multicolumn{1}{|l|}{\textbf{$\epsilon$-greedy}} & \textcolor{ForestGreen}{\textbf{2.13}} & \textcolor{red}{-0.60}& \textcolor{red}{-2.18} & \textcolor{red}{-2.20}
&
\textcolor{red}{-0.11} & \textcolor{red}{-3.95} \\ \hline
\multicolumn{1}{|l|}{\textbf{WSLS}} & \textcolor{ForestGreen}{0.94} & \textcolor{ForestGreen}{\textbf{3.75}} & \textcolor{ForestGreen}{0.80} &
\textcolor{ForestGreen}{0.10}
& 
\textcolor{red}{-2.21} & \textcolor{red}{-1.97} \\ \hline
\multicolumn{1}{|l|}{\textbf{TS}} & \textcolor{ForestGreen}{0.33} & \textcolor{ForestGreen}{\textbf{0.66}} & \textcolor{ForestGreen}{0.60} & \textcolor{ForestGreen}{0.44} 
&
\textcolor{red}{-1.53} & \textcolor{red}{-0.19} \\ \hline
\multicolumn{1}{|l|}{\textbf{UCL}} & \textcolor{ForestGreen}{1.76} & \textcolor{red}{-1.19} & \textcolor{ForestGreen}{\textbf{2.51}} &
\textcolor{ForestGreen}{2.43}  & \textcolor{ForestGreen}{0.74}
& \textcolor{ForestGreen}{{1.28}}\\ \hline
\multicolumn{1}{|l|}{\textbf{GI}} & \textcolor{red}{-1.09} & \textcolor{red}{\textbf{-0.28}} & \textcolor{red}{-0.77} & 
 \textcolor{red}{-0.85} 
& 
\textcolor{red}{-0.71} & \textcolor{red}{-1.50} \\ \hline\hline
\multicolumn{1}{|l|}{\textbf{$\epsilon$-optimal}} & \textcolor{ForestGreen}{0.24} & \textcolor{ForestGreen}{1.17} & \textcolor{ForestGreen}{{1.24}} & 
\textcolor{ForestGreen}{{1.28}}
& \textcolor{red}{-3.09} & \textcolor{ForestGreen}{\textbf{1.46}} \\ \hline 
\end{tabular}
\label{tbl:sensitivity}
\end{center}
\end{table}

Across this data, higher mutual information is associated with lower assisted regret, $r(10)=-.82$, $p<.001$. Furthermore, by looking at the $\epsilon$-greedy and TS results as a sequence, we can observe a clear and distinct pattern. Policies that are more deterministic tend to be easier to help. This is supported by the results in Table \ref{tbl:inverse_bandit_inference}, which shows that it is easier to infer reward parameters for WSLS and GI (i.e., the two policies with the highest mutual information) than TS and $\epsilon$-greedy.

\subsection{Sensitivity to Model Misspecification}
\label{sec:model_misspec}

In the previous three experiments, we assumed knowledge of the correct learning policy. In this experiment, we consider the implications of incorrectly modeling learning. We took the policies we trained in Section~\ref{sec:assist_possible} and tested them with every human policy. We report the net change in reward in Table~\ref{tbl:sensitivity}. We colored cases where the robot \robot{} successfully assists the human \human{} green, and cases where it fails to assist red. We bolded the best performance in each row.

Modeling learning (even with the incorrect model) generally leads to lower regret than assuming $\epsilon$-optimal for every learning \human{} policy. However, when the robot has the wrong model of learning, it can fail to assist the human. For example, $\epsilon$-greedy is only successfully assisted when it is correctly modeled.  This argues that research into the learning strategies employed by people in practice is an important area for future research. 

An intriguing result is that assuming $\epsilon$-greedy \emph{does} successfully assist all of the suboptimal learning policies. This suggests that, although some learning policies must be well modeled, learning to assist some models can be transferred to other models in some cases. On the other hand, trying to assist GI leads to a policy that hurts performance across the board. In future work, we plan to identify classes of learners which can be assisted by the same robot policy. 



\subsection{Other Paradigms of Assistance}
\label{sec:other_modes}
\begin{figure}
    \centering
    \includegraphics[width=0.98\columnwidth]{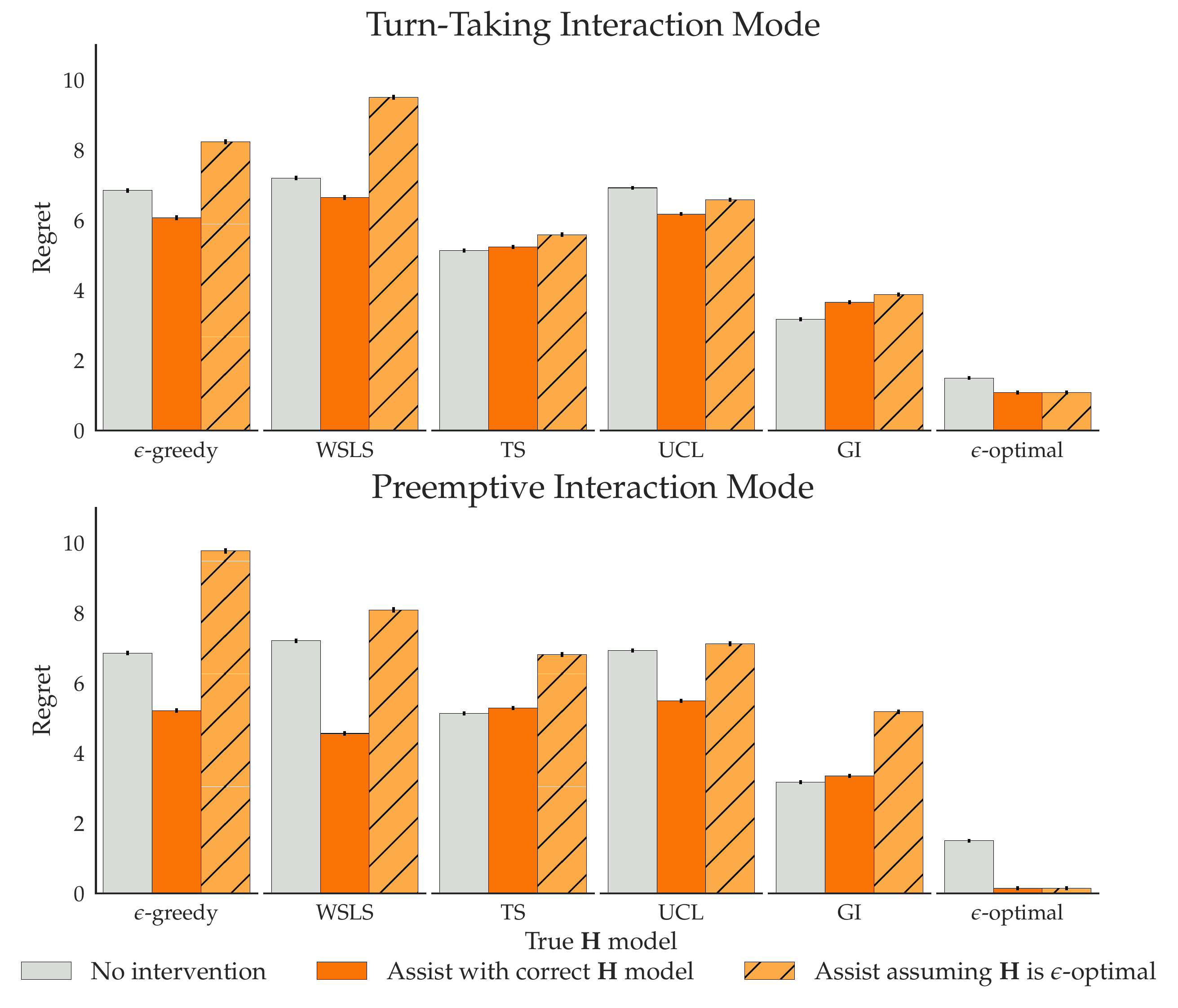}
    \caption{Averaged regret of various human policies (lower = better) over 100,000 trajectories under different interaction modes. Assistance lowers the regret of $\epsilon$-greedy, WSLS, and UCL in both the preemptive and turn-taking interaction modes. Assistance while ignoring learning is worse than no assistance in almost every case. This offers further support for the importance of modeling learning when assisting humans.}
    \label{fig:diff modes}
\end{figure}
The assistive multi-armed bandit considered so far only captures one mode of interaction. It is straightforward to consider extensions to different modes. We consider two such modes. The first is turn taking, where \human{} and \robot{} take turns selecting arms. This can be more difficult because the robot has to act in early rounds, when it has less information, and because the human has to act in later rounds, when \human{} may be noisy and the best arm has already been identified. 

The second variant we consider is preemptive interaction. In this case, \robot{} goes first and either pulls an arm or lets \human{} act. This creates an exploration-exploitation tradeoff. \robot{} only observes \human's arm pulls by actually allowing \human{} to pull arms and so it must choose between observing \human's behavior and exploiting that knowledge.   

Fig.~\ref{fig:diff modes} shows the experiment from Section~\ref{sec:assist_possible} applied to each of these interaction modes. The results are largely similar to those of teleoperation. We are able to assist the suboptimal policies and modeling learning as $\epsilon$-optimality increases regret in all cases. We see that WSLS is a less attractive policy in these settings: because \human{} actions are always executed when they are observed, it no longer makes sense for \human{} to employ a \emph{purely} communicative policy. However, we do 
still see results that confirm Proposition~\ref{thm:MI_bound}: more deterministic policies that reveal more information are easier to help. 

\subsubsection{Explore, observe, then exploit}
In looking at the policies learned for the preemptive interaction mode, we see an interesting pattern emerge. Because the policy has to choose between selecting arms directly and observing \human, by looking at rollouts of the learned policy we can determine when it is observing the human. We find that a clear pattern emerges. \robot{} initially \emph{explores} for \human: it selects arms uniformly to give \human{} a good estimate of $\theta$. Then, \robot{} \emph{observes} \human's arm pulls to identify the optimal arm. For the final rounds, \robot{} \emph{exploits} this information and pulls its estimate of the optimal arm. Fig.~\ref{fig:traj_layout} compares a representative trajectory with one that is optimized against an $\epsilon$-optimal \human.  

\section{Discussion}
\subsection{Summary} In this work, we studied the problem of assisting a human who is learning about their own preferences. Our central thesis is that by modeling and influencing the dynamics of a human's learning, we can create robots that can better assist people in achieving their preferences. We formalized this as the assistive multi-armed bandit problem, which extends the multi-armed bandit to account for teleoperation and human learning. We analyzed our formalism theoretically, then used policy optimization in proof-of-concept experiments that supported our theoretical results. Surprisingly, we found that a person that is better at learning in isolation does not necessarily lead to a human-robot team that performs better. We highlighted a theoretical connection between the amount of information communicated by the human policy and the best assisted performance, which we validated in our experiments. 



\subsection{Limitations and Future Work.} 
\subsubsection{Stateful environments} One significant limitation of this work is that we assume the environment the human is acting in is stateless. In practice, the environmental state changes over time, and the state can greatly influence the reward associated with certain actions.  This suggests natural extensions of the assistive multi-armed bandit to the contextual bandit~\cite{agarwal2014taming} and full Markov decision process (MDP)~\cite{bellman1957markov} settings. 

\subsubsection{Realistic human policies} Another significant limitation of this work is the use of simple bandit policies in place of actual human policies. In addition, we do not have access to the true human policy in any case. Future work can remedy this by incorporating more realistic policies, and can study to what extent these results generalize to assisting actual humans. 


\subsection{Closing Remarks}
The assistive multi-armed bandit is representative of a world where robots are supposed to assist people, even though people haven't figured out what they want yet. Laying down the theoretical foundations for these kinds of interaction paradigms is an important and under-served aspect of HRI.

\section*{Acknowledgements}
We thank the members of the InterACT lab and the Center for Human Compatible AI for helpful advice. This work was partially supported by OpenPhil, AFOSR, NSF, and NVIDIA.


\begin{thebibliography}{10}
\providecommand{\url}[1]{#1}
\csname url@samestyle\endcsname
\providecommand{\newblock}{\relax}
\providecommand{\bibinfo}[2]{#2}
\providecommand{\BIBentrySTDinterwordspacing}{\spaceskip=0pt\relax}
\providecommand{\BIBentryALTinterwordstretchfactor}{4}
\providecommand{\BIBentryALTinterwordspacing}{\spaceskip=\fontdimen2\font plus
\BIBentryALTinterwordstretchfactor\fontdimen3\font minus
  \fontdimen4\font\relax}
\providecommand{\BIBforeignlanguage}[2]{{%
\expandafter\ifx\csname l@#1\endcsname\relax
\typeout{** WARNING: IEEEtran.bst: No hyphenation pattern has been}%
\typeout{** loaded for the language `#1'. Using the pattern for}%
\typeout{** the default language instead.}%
\else
\language=\csname l@#1\endcsname
\fi
#2}}
\providecommand{\BIBdecl}{\relax}
\BIBdecl

\bibitem{furnkranz2011preference}
J.~F{\"u}rnkranz and E.~H{\"u}llermeier, ``Preference learning,'' in
  \emph{Encyclopedia of Machine Learning}.\hskip 1em plus 0.5em minus
  0.4em\relax Springer, 2011, pp. 789--795.

\bibitem{sakagami1997learning}
H.~Sakagami and T.~Kamba, ``Learning personal preferences on online newspaper
  articles from user behaviors,'' \emph{Computer Networks and ISDN Systems},
  vol.~29, no. 8-13, pp. 1447--1455, 1997.

\bibitem{li2010contextual}
L.~Li, W.~Chu, J.~Langford, and R.~E. Schapire, ``A contextual-bandit approach
  to personalized news article recommendation,'' in \emph{Proceedings of the
  19th international conference on World wide web}.\hskip 1em plus 0.5em minus
  0.4em\relax ACM, 2010, pp. 661--670.

\bibitem{zhao2016user}
Z.~Zhao, H.~Lu, D.~Cai, X.~He, and Y.~Zhuang, ``User preference learning for
  online social recommendation,'' \emph{IEEE Transactions on Knowledge and Data
  Engineering}, vol.~28, no.~9, pp. 2522--2534, 2016.

\bibitem{basu1998recommendation}
C.~Basu, H.~Hirsh, W.~Cohen \emph{et~al.}, ``Recommendation as classification:
  Using social and content-based information in recommendation,'' in
  \emph{Aaai/iaai}, 1998, pp. 714--720.

\bibitem{goel2009predicting}
D.~Goel and D.~Batra, ``Predicting user preference for movies using netflix
  database,'' \emph{Department of Electrical and Computer Engineering, Carniege
  Mellon University}, 2009.

\bibitem{wang2018movie}
H.~Wang and H.~Zhang, ``Movie genre preference prediction using machine
  learning for customer-based information,'' in \emph{Computing and
  Communication Workshop and Conference (CCWC), 2018 IEEE 8th Annual}.\hskip
  1em plus 0.5em minus 0.4em\relax IEEE, 2018, pp. 110--116.

\bibitem{akgun2012keyframe}
B.~Akgun, M.~Cakmak, K.~Jiang, and A.~L. Thomaz, ``Keyframe-based learning from
  demonstration,'' \emph{International Journal of Social Robotics}, vol.~4,
  no.~4, pp. 343--355, 2012.

\bibitem{kuderer2012feature}
M.~Kuderer, H.~Kretzschmar, C.~Sprunk, and W.~Burgard, ``Feature-based
  prediction of trajectories for socially compliant navigation.'' in
  \emph{Robotics: science and systems}, 2012.

\bibitem{fischer2016comparison}
K.~Fischer, F.~Kirstein, L.~C. Jensen, N.~Kr{\"u}ger, K.~Kukli{\'n}ski, T.~R.
  Savarimuthu \emph{et~al.}, ``A comparison of types of robot control for
  programming by demonstration,'' in \emph{The Eleventh ACM/IEEE International
  Conference on Human Robot Interaction}.\hskip 1em plus 0.5em minus
  0.4em\relax IEEE Press, 2016, pp. 213--220.

\bibitem{sadigh2016information}
D.~Sadigh, S.~S. Sastry, S.~A. Seshia, and A.~Dragan, ``Information gathering
  actions over human internal state,'' in \emph{Intelligent Robots and Systems
  (IROS), 2016 IEEE/RSJ International Conference on}.\hskip 1em plus 0.5em
  minus 0.4em\relax IEEE, 2016, pp. 66--73.

\bibitem{bajcsy2018learning}
A.~Bajcsy, D.~P. Losey, M.~K. O'Malley, and A.~D. Dragan, ``Learning from
  physical human corrections, one feature at a time,'' in \emph{Proceedings of
  the 2018 ACM/IEEE International Conference on Human-Robot Interaction}.\hskip
  1em plus 0.5em minus 0.4em\relax ACM, 2018, pp. 141--149.

\bibitem{beigman2006learning}
E.~Beigman and R.~Vohra, ``Learning from revealed preference,'' in
  \emph{Proceedings of the 7th ACM Conference on Electronic Commerce}.\hskip
  1em plus 0.5em minus 0.4em\relax ACM, 2006, pp. 36--42.

\bibitem{zadimoghaddam2012efficiently}
M.~Zadimoghaddam and A.~Roth, ``Efficiently learning from revealed
  preference,'' in \emph{International Workshop on Internet and Network
  Economics}.\hskip 1em plus 0.5em minus 0.4em\relax Springer, 2012, pp.
  114--127.

\bibitem{balcan2014learning}
M.-F. Balcan, A.~Daniely, R.~Mehta, R.~Urner, and V.~V. Vazirani, ``Learning
  economic parameters from revealed preferences,'' in \emph{International
  Conference on Web and Internet Economics}.\hskip 1em plus 0.5em minus
  0.4em\relax Springer, 2014, pp. 338--353.

\bibitem{kingsley2010preference}
D.~C. Kingsley and T.~C. Brown, ``Preference uncertainty, preference learning,
  and paired comparison experiments,'' \emph{Land Economics}, vol.~86, no.~3,
  pp. 530--544, 2010.

\bibitem{kalman1964linear}
R.~E. Kalman, ``When is a linear control system optimal?'' \emph{Journal of
  Basic Engineering}, vol.~86, no.~1, pp. 51--60, 1964.

\bibitem{russell1998learning}
S.~Russell, ``Learning agents for uncertain environments,'' in
  \emph{Proceedings of the eleventh annual conference on Computational learning
  theory}.\hskip 1em plus 0.5em minus 0.4em\relax ACM, 1998, pp. 101--103.

\bibitem{allais1979so}
M.~Allais, ``The so-called allais paradox and rational decisions under
  uncertainty,'' in \emph{Expected utility hypotheses and the Allais
  paradox}.\hskip 1em plus 0.5em minus 0.4em\relax Springer, 1979, pp.
  437--681.

\bibitem{baron2000thinking}
J.~Baron, \emph{Thinking and deciding}.\hskip 1em plus 0.5em minus 0.4em\relax
  Cambridge University Press, 2000.

\bibitem{cyert1975adaptive}
R.~M. Cyert and M.~H. DeGroot, ``Adaptive utility,'' in \emph{Adaptive Economic
  Models}.\hskip 1em plus 0.5em minus 0.4em\relax Elsevier, 1975, pp. 223--246.

\bibitem{shogren2000preference}
J.~F. Shogren, J.~A. List, and D.~J. Hayes, ``Preference learning in
  consecutive experimental auctions,'' \emph{American Journal of Agricultural
  Economics}, vol.~82, no.~4, pp. 1016--1021, 2000.

\bibitem{duan2016rl}
Y.~Duan, J.~Schulman, X.~Chen, P.~L. Bartlett, I.~Sutskever, and P.~Abbeel,
  ``Rl2: Fast reinforcement learning via slow reinforcement learning,''
  \emph{arXiv preprint arXiv:1611.02779}, 2016.

\bibitem{ng2000algorithms}
A.~Y. Ng and S.~J. Russell, ``Algorithms for inverse reinforcement learning.''
  in \emph{Icml}, 2000, pp. 663--670.

\bibitem{abbeel2004apprenticeship}
P.~Abbeel and A.~Y. Ng, ``Apprenticeship learning via inverse reinforcement
  learning,'' in \emph{Proceedings of the twenty-first international conference
  on Machine learning}.\hskip 1em plus 0.5em minus 0.4em\relax ACM, 2004, p.~1.

\bibitem{lai1985asymptotically}
T.~L. Lai and H.~Robbins, ``Asymptotically efficient adaptive allocation
  rules,'' \emph{Advances in applied mathematics}, vol.~6, no.~1, pp. 4--22,
  1985.

\bibitem{mannor2004sample}
S.~Mannor and J.~N. Tsitsiklis, ``The sample complexity of exploration in the
  multi-armed bandit problem,'' \emph{Journal of Machine Learning Research},
  vol.~5, no. Jun, pp. 623--648, 2004.

\bibitem{agrawal1995sample}
R.~Agrawal, ``Sample mean based index policies by o (log n) regret for the
  multi-armed bandit problem,'' \emph{Advances in Applied Probability},
  vol.~27, no.~4, pp. 1054--1078, 1995.

\bibitem{cappe2013kullback}
O.~Capp{\'e}, A.~Garivier, O.-A. Maillard, R.~Munos, G.~Stoltz \emph{et~al.},
  ``Kullback--leibler upper confidence bounds for optimal sequential
  allocation,'' \emph{The Annals of Statistics}, vol.~41, no.~3, pp.
  1516--1541, 2013.

\bibitem{robbins1952some}
H.~Robbins, ``Some aspects of the sequential design of experiments,'' in
  \emph{Bulletin of the American Mathematical Society.}, 1952.

\bibitem{Singh2000}
\BIBentryALTinterwordspacing
S.~Singh, T.~Jaakkola, M.~L. Littman, and C.~Szepesv{\'a}ri, ``Convergence
  results for single-step on-policy reinforcement-learning algorithms,''
  \emph{Machine Learning}, vol.~38, no.~3, pp. 287--308, Mar 2000. [Online].
  Available: \url{https://doi.org/10.1023/A:1007678930559}
\BIBentrySTDinterwordspacing

\bibitem{robbins1985some}
H.~Robbins, ``Some aspects of the sequential design of experiments,'' in
  \emph{Herbert Robbins Selected Papers}.\hskip 1em plus 0.5em minus
  0.4em\relax Springer, 1985, pp. 169--177.

\bibitem{fano2008fano}
R.~M. Fano, ``Fano inequality,'' \emph{Scholarpedia}, vol.~3, no.~10, p. 6648,
  2008.

\bibitem{kaelbling1998planning}
L.~P. Kaelbling, M.~L. Littman, and A.~R. Cassandra, ``Planning and acting in
  partially observable stochastic domains,'' \emph{Artificial intelligence},
  vol. 101, no. 1-2, pp. 99--134, 1998.

\bibitem{silver2010monte}
D.~Silver and J.~Veness, ``Monte-carlo planning in large pomdps,'' in
  \emph{Advances in neural information processing systems}, 2010, pp.
  2164--2172.

\bibitem{guez2012efficient}
A.~Guez, D.~Silver, and P.~Dayan, ``Efficient bayes-adaptive reinforcement
  learning using sample-based search,'' in \emph{Advances in Neural Information
  Processing Systems}, 2012, pp. 1025--1033.

\bibitem{pineau2003point}
J.~Pineau, G.~Gordon, S.~Thrun \emph{et~al.}, ``Point-based value iteration: An
  anytime algorithm for pomdps,'' in \emph{IJCAI}, vol.~3, 2003, pp.
  1025--1032.

\bibitem{sutton2000policy}
R.~S. Sutton, D.~A. McAllester, S.~P. Singh, and Y.~Mansour, ``Policy gradient
  methods for reinforcement learning with function approximation,'' in
  \emph{Advances in neural information processing systems}, 2000, pp.
  1057--1063.

\bibitem{schulman2017proximal}
J.~Schulman, F.~Wolski, P.~Dhariwal, A.~Radford, and O.~Klimov, ``Proximal
  policy optimization algorithms,'' \emph{arXiv preprint arXiv:1707.06347},
  2017.

\bibitem{thompson1933likelihood}
W.~R. Thompson, ``On the likelihood that one unknown probability exceeds
  another in view of the evidence of two samples,'' \emph{Biometrika}, vol.~25,
  no. 3/4, pp. 285--294, 1933.

\bibitem{reverdy2014modeling}
P.~B. Reverdy, V.~Srivastava, and N.~E. Leonard, ``Modeling human decision
  making in generalized gaussian multiarmed bandits,'' \emph{Proceedings of the
  IEEE}, vol. 102, no.~4, pp. 544--571, 2014.

\bibitem{kaufmann2012bayesian}
E.~Kaufmann, O.~Capp{\'e}, and A.~Garivier, ``On bayesian upper confidence
  bounds for bandit problems,'' in \emph{Artificial Intelligence and
  Statistics}, 2012, pp. 592--600.

\bibitem{gittins1979bandit}
J.~C. Gittins, ``Bandit processes and dynamic allocation indices,''
  \emph{Journal of the Royal Statistical Society. Series B (Methodological)},
  pp. 148--177, 1979.

\bibitem{chakravorty2014multi}
J.~Chakravorty and A.~Mahajan, ``Multi-armed bandits, gittins index, and its
  calculation,'' \emph{Methods and Applications of Statistics in Clinical
  Trials: Planning, Analysis, and Inferential Methods, Volume 2}, pp. 416--435,
  2014.

\bibitem{metropolis1953equation}
N.~Metropolis, A.~W. Rosenbluth, M.~N. Rosenbluth, A.~H. Teller, and E.~Teller,
  ``Equation of state calculations by fast computing machines,'' \emph{The
  journal of chemical physics}, vol.~21, no.~6, pp. 1087--1092, 1953.

\bibitem{pascanu2013difficulty}
R.~Pascanu, T.~Mikolov, and Y.~Bengio, ``On the difficulty of training
  recurrent neural networks,'' in \emph{International Conference on Machine
  Learning}, 2013, pp. 1310--1318.

\bibitem{cho2014properties}
K.~Cho, B.~Van~Merri{\"e}nboer, D.~Bahdanau, and Y.~Bengio, ``On the properties
  of neural machine translation: Encoder-decoder approaches,'' \emph{arXiv
  preprint arXiv:1409.1259}, 2014.

\bibitem{degris2012model}
T.~Degris, P.~M. Pilarski, and R.~S. Sutton, ``Model-free reinforcement
  learning with continuous action in practice,'' in \emph{American Control
  Conference (ACC), 2012}.\hskip 1em plus 0.5em minus 0.4em\relax IEEE, 2012,
  pp. 2177--2182.

\bibitem{schulman2015high}
J.~Schulman, P.~Moritz, S.~Levine, M.~Jordan, and P.~Abbeel, ``High-dimensional
  continuous control using generalized advantage estimation,'' \emph{arXiv
  preprint arXiv:1506.02438}, 2015.

\bibitem{salimans2016weight}
T.~Salimans and D.~P. Kingma, ``Weight normalization: A simple
  reparameterization to accelerate training of deep neural networks,'' in
  \emph{Advances in Neural Information Processing Systems}, 2016, pp. 901--909.

\bibitem{agarwal2014taming}
A.~Agarwal, D.~Hsu, S.~Kale, J.~Langford, L.~Li, and R.~Schapire, ``Taming the
  monster: A fast and simple algorithm for contextual bandits,'' in
  \emph{International Conference on Machine Learning}, 2014, pp. 1638--1646.

\bibitem{bellman1957markov}
\BIBentryALTinterwordspacing
R.~Bellman, ``A markovian decision process,'' \emph{Journal of Mathematics and
  Mechanics}, vol.~6, no.~5, pp. 679--684, 1957. [Online]. Available:
  \url{http://www.jstor.org/stable/24900506}
\BIBentrySTDinterwordspacing

\end{thebibliography}
\end{document}